\documentclass[preprint]{siamart251216}

\usepackage[T1]{fontenc}
\usepackage[utf8]{inputenc}
\usepackage[english]{babel}
\usepackage[activate={true,nocompatibility}, final, kerning=true, spacing=true]{microtype}

\usepackage{enumerate}

\usepackage{hyperref}

\usepackage{amsmath, amssymb, amsfonts}
\usepackage{bookmark}

\def\RR{\mathbb{R}}

\def\NN{\mathbb{N}}

\def\XX{\mathbb{X}}
\def\UU{\mathbb{U}}

\def\EE{\mathbb{E}}
\def\PP{\mathbb{P}}

\def\cC{\mathcal{C}}
\def\cD{\mathcal{D}}
\def\cE{\mathcal{E}}

\def\cK{\mathcal{K}}
\def\cM{\mathcal{M}}
\def\cN{\mathcal{N}}
\def\cL{\mathcal{L}}
\def\cO{\mathcal{O}}
\def\cP{\mathcal{P}}

\def\cT{\mathcal{T}}
\def\cU{\mathcal{U}}

\def\cW{\mathcal{W}}

\def\cZ{\mathcal{Z}}

\def\fC{\mathfrak{C}}

\def\fc{\mathfrak{c}}

\def\Tr{\mathrm{Tr}}

\def\e{\mathrm{e}}
\def\de{\mathrm{d}}

\let\epsilon\varepsilon
\let\phi\varphi

\newcommand{\inner}[2]{\langle #1, #2 \rangle}

\DeclareRobustCommand{\rchi}{{\mathpalette\irchi\relax}}
\newcommand{\irchi}[2]{\scalebox{1.1}{\raisebox{\depth}{$#1\chi$}}}

\DeclareMathOperator*{\argmin}{arg\,min}

\DeclareMathOperator{\Prob}{Prob}

\usepackage{enumitem}

\usepackage{algpseudocodex}
\usepackage{algorithm}

\usepackage{tikz}
\usepackage{pgfplots}
\usetikzlibrary{arrows.meta, positioning, shapes.geometric, backgrounds}
\pgfplotsset{compat=1.18}

\usepackage{array}
\allowdisplaybreaks

\newsiamremark{remark}{Remark}
\newsiamthm{assumption}{Assumption}

\usepackage{enumitem}

\usepackage{algpseudocodex}
\usepackage{algorithm}

\usepackage{tikz}
\usepackage{pgfplots}
\usetikzlibrary{arrows.meta, positioning, shapes.geometric, backgrounds}
\pgfplotsset{compat=1.18}

\usepackage[dvipsnames]{xcolor}
\newcommand{\ka}[1]{{#1}}

\usepackage{array}
\title{Generalization Bounds on Optimal Control for Transformer Training and Wasserstein Distributional Robustness\thanks{This is the preprint version.\funding{This work has been partially supported by T\"UB\.ITAK through the 2211-E program, by Turkish Academy of Sciences through the 2025 GEBIP Award, and by the Natural Sciences and Engineering Research Council of Canada (NSERC).}}}

\author{Ka\u{g}an Akman\thanks{Department of Mathematics, Bilkent University, 06800, Ankara,
Turkey (\email{kagan.akman@bilkent.edu.tr}).}
\and Naci Saldi\thanks{Department of Mathematics, Bilkent University, 06800, Ankara,
Turkey (\email{naci.saldi@bilkent.edu.tr}).}
\and Serdar Y\"uksel\thanks{Department of Mathematics and Statistics, Queen's University, Kingston, K7L 3N8, Ontario, Canada (\email{yuksel@queensu.ca})}
}

\begin{document}

\maketitle

\begin{abstract}
    We derive finite-sample generalization bounds for Transformers trained with dynamic programming recursions. Building on the doubly lifted, measure-valued formulation of Transformer dynamics \cite{akman2026optimal}, we view data sets as probability laws on pairs of empirical input-output measures, allowing us to interpret the training problem as a finite-horizon Markovian control problem. We then analyze a quantized model, derived by quantizing the state, action, and measure-state spaces, and derive explicit finite-sample generalization bounds using concentration inequalities for empirical laws on finite metric spaces together with a Lipschitz stability estimate for the value function. These bounds are transferred to the base model at the cost of an explicit approximation error. Finally, we show that the same machinery yields a distributionally robust control formulation of the training problem, connecting Transformer generalization to Wasserstein distributionally robust optimization.
\end{abstract}

\begin{keywords}
    Transformers, generalization bounds, Markov decision processes,  concentration inequalities, distributionally robust control
\end{keywords}

\begin{MSCcodes}
Primary 93E20; Secondary 68T07, 90C17, 49Q22, 49L20, 60B10, 49J45.
\end{MSCcodes}

\section{Introduction}
Transformers, introduced in \cite{vaswani2017attention}, constitute an efficient and widely deployed class of neural networks, often used to create large language models. Transformer models are trained using finite data sets assumed to be independently sampled from an identical distribution while they are mostly deployed for data-generating applications that are not fully represented by those samples. This raises the fundamental question of generalization: {\em How far does the performance of a model trained on the empirical data transfer to the underlying true law?}

The mismatch error is defined to be the difference between the performance of the policy learned (from training based on empirical data) when applied to the true distribution, and the optimal cost under the true distribution. Equivalently, this question can be seen as a problem of robustness to perturbations of initial law. In particular, this problem is structural for Transformers as the evolution of tokens explicitly depends on the empirical distribution of the sequence they belong to.

In this paper, building on \cite{akman2026optimal}, we develop a framework for Transformer generalization in which Transformers are not viewed as members of a hypothesis class, but a discrete-time controlled dynamical system whose control variables are the layer-wise training weights. In the doubly lifted model, data sets are raised to empirical measures on pairs of empirical input-output measures. This setting poses the training problem as finding a control trajectory satisfying an optimization objective modeling mapping an input (feature vector) to the correct output (label vector). Generalization is then formulated as bounding the excess risk of the empirical training minimizer; the gap between its performance under the true law and the optimal value under that law. The finite-sample bounds we obtain come from a Lipschitz stability estimate for the value function, and a concentration inequality for the empirical lifted law.

For Transformers, this problem presents itself with several difficulties. First, the dynamics of a single token are not Markovian; it depends not only on the current realization of the token and chosen weights, but also, on the empirical measure of the whole sequence. To restore Markovian property, the token-wise dynamics is lifted to the realm of probability measures, however, this leads to an optimization problem over infinite-dimensional vector spaces. To obtain explicit and computable bounds, we pass to a quantized model, and then transfer the bounds to the base model at the cost of approximation error. Finally, in Section~\ref{sec:dr}, the same machinery used to derive finite-sample bounds admits a complementary angle: the training problem can be reconsidered as a Wasserstein distributionally robust control problem, or equivalently, a zero-sum game in which the controller selects a weight trajectory while an adversary counters by selecting a nearby data generating law.

\subsection{Related Work}
Here, we discuss the relevant literature under three branches.
\smallskip

\noindent{\bf Generalization in statistical learning:} The generalization theory has been studied from many complementary angles with the common aim being to control the gap between true risk and empirical risk by a quantity other than merely a parameter count. \cite{vapnik1971uniform} links uniform convergence of empirical risk to population risk with VC-dimension, serving as a capacity measure of a hypothesis class. \cite{bartlett2002rademacher} introduce Rademacher and Gaussian complexities as data-dependent capacity measures. \cite{bousquet2002stability} derive generalization bounds from algorithmic stability and show that many algorithms such as support vector machines, for regression and classification, satisfy the required stability conditions. \cite{mcallester1999pac} gives bounds in terms of posterior's discrepancy from a fixed prior using PAC-Bayesian framework. \cite{xu2017information} use an information--theoretic view for generalization in generic statistical learning problems. \cite{raginsky2026separating} study generalization from a deterministic viewpoint by decoupling the dynamics with usual probabilistic assumptions on data.

For neural networks, norm-based analyses \cite{bartlett2017spectrally,neyshabur2015norm} bound this gap using the magnitudes of the trained weights, scaling with the product of the layers' spectral norms. \cite{bartlett1998sample} finds that norm-based capacities describe the generalization performance of a neural network better than parameter count-based measures for pattern classification problems. \cite{bartlett2019nearly} calculate almost tight VC-dimension for piecewise linear neural networks with ReLU activation function. \cite{golowich2018size} find bounds for the Rademacher complexity of neural networks that are independent of network size. \cite{neyshabur2017pac} merge norm-based and PAC-Bayesian techniques to obtain a generalization bound with perturbation arguments. \cite{zhang2017understanding} experimentally show that standard deep neural networks trained via stochastic gradient descent can fit random labels while maintaining generalization well enough to fit the original labels. For Transformers specifically, \cite{trauger2024sequence} provide norm-based generalization bounds that are independent of input sequence length. \cite{edelman2022inductive} give covering-number-based arguments for capacity of the self-attention mechanism, which is a central part of Transformer architecture. \cite{li2026sharper} derive generalization bounds for Transformers also using covering numbers.

Our approach differs by depending on the geometry of data-law and quantizing the distributional error via Wasserstein distance on the true law and the sampled law. The final generalization error involves an approximation error due to the training mechanism \cite[Algorithm 1]{akman2026optimal} in addition to the distributional error.
\smallskip

\noindent{\bf Distributionally robust optimization:} Wasserstein distributionally robust optimization (WDRO) replaces the empirical risk with its worst-case counterpart over an ambiguity set of probability laws around the empirical measures as a ball in Wasserstein metric. Foundational results provide finite-sample performance guarantees by reformulating WDRO problems as tractable linear programs by \cite{mohajerin2018data}. \cite{blanchet2019quantifying} establish the duality theory for WDRO. The connection between WDRO and explicit regularization has been clarified by \cite{gao2023distributionally,shafieezadeh2019regularization}. \cite{lee2018minimax} proves generalization bounds for generic statistical learning using covering numbers from WDRO lens. \cite{gao2023finite} provides non-asymptotic guarantees using WDRO that avoid the curse of dimensionality.
\smallskip

\noindent{\bf Control---theoretic perspectives on Transformers:} \cite{weinan2017proposal} is one of the pioneers suggesting a control--theoretic view of neural networks. \cite{haber2018stable} develop stability and well-posedness results. \cite{han2019mean} follow an ensemble control approach to train deep neural networks. \cite{li2018maximum} reformulate deep learning as an optimal control problem, and the resulting problems are solved using Pontryagin's maximum principle, see \cite{liu2019deep} for a further survey and the references therein. \cite{geshkovski2025mathematical} study Transformers with a rigorous and mathematical analysis, and they further establish the clustering behaviors of Transformers \cite{geshkovski2023emergence}. The control perspective posed as an open question by \cite{geshkovski2025mathematical} is studied by \cite{akman2026optimal} by providing a discrete-time measure-valued formulation of Transformers trained using an approximate scheme through quantization and dynamic programming principle. Closest work to ours is \cite{kan2026optimal}, in which they cast Transformer training as a continuous-time optimal control problem with an optimal-transport regularization penalty. They derive a Lipschitz estimate for the pushforward by Transformer dynamics on data distributions with respect to Wasserstein distance. Also, using the convergence rate for Wasserstein distance in \cite{fournier2015rate}, they derive statistical generalization bounds as well. Their approach mostly relies on convexity for encoder-only/decoder-only models and a threshold hypothesis for the optimal transport regularization term, while we differ: Our stability estimate is for the value function and needs only compactness; the usual training paradigm with gradient descent family is replaced with dynamic programming; and the adversary perturbs the data-generating law on input-output pairs, rather than the input and output measures

\subsection{Our contributions} Classical generalization theory treats a Transformer as a member of a hypothesis class, and measures the generalization capacity through norms or covering numbers, not emphasizing a structural feature of the Transformer architecture: a token's evolution depends on the empirical measure of the sequence it belongs to. We take a different route, by regarding Transformers as controlled measure-valued dynamical systems whose controls are the layer weights, and ask how the learned controls perform when the associated empirical law is replaced with the true data-generating law. Within this formulation, we make two contributions.
\smallskip

\begin{itemize}
    \item[(I)] {\bf An explicit finite-sample guarantee for an implementable, fully-discretized optimal control problem.} Building on the doubly-lifted model of \cite{akman2026optimal}, we quantize the state, action, and measure-state spaces, yielding a finite Markov decision process whose optimal control policies are realizable through dynamic programming. For this training scheme, we prove an excess-risk bound (Theorem~\ref{thm:quantized}) that is explicit in sample size, and we transfer it to the base model at the cost of an explicit approximation error due to quantization (Theorem~\ref{thm:main}).
    \item[(II)] {\bf Wasserstein distributionally robust control for Transformers.} The doubly-lifted formulation described in Section~\ref{sec:formulation} enables another perspective for the generalization problem. By trying to find the optimal weight trajectory against the effort of an adversary choosing a perturbed initial law from a Wasserstein-ball around the true law, we equivalently formulate the generalization as a distributionally robust control problem where the geometry is provided by the Wasserstein metric. This alternative view establishes:
    \begin{itemize}
        \item For the described minimax problem, we prove the existence of a robust minimizer in Proposition~\ref{prop:existence_of_robust_min}, around which the value function performs well-enough, where the discrepancy is a controllable quantity.
        \item In Theorem~\ref{thm:robust-convergence}, we explain the relation between robust minimizers and training minimizers through (i) $\Gamma$-convergence of robust objective to training objective, and (ii) Painlev\'e-Kuratowski upper limit of the set of robust minimizers to the set of training minimizers.
        \item We prove that robust minimizers share the same generalization bounds with the training minimizers in Theorem~\ref{thm:r-quantized} in the quantized regime, then in Theorem~\ref{thm:r-main} for the base problem.
    \end{itemize}
\end{itemize}
All of these results are established via Lipschitz estimates provided in Lemmata~\ref{lem:lipschitz} and \ref{lem:lipU}.
\begin{figure}[H]
\centering
\begin{tikzpicture}[
    scale=0.75,
    transform shape,
    level/.style={draw, rounded corners, align=center,
                    minimum width=4.2cm, minimum height=1.1cm,
                    font=\small},
    arrow/.style={-{Latex[length=2.5mm]}, thick},
    label/.style={font=\footnotesize\itshape, align=left,
                    text width=5cm}
]

\node[level, fill=gray!5]  (base)   at (0,0)
  {Particle level\\[2pt] $X_t^i = (p_i, x^i_t) \in \underbrace{\mathrm{PE}_N\times\mathrm{S}}_{\XX}$\\$U_t := (W_t,A_t,b_t,Q_t,K_t,V_t)$};
\node[level, fill=blue!5]  (lift1)  at (0,-3)
  {Singly lifted\\[2pt] $\mu_t \in \cP(\XX)$\\
  $\mu_t := \frac{1}{N}\sum_{i=1}^N\delta_{X_t^{i}}$\\
  $\Phi$: Transformer flow map};
\node[level, fill=blue!10] (lift2)  at (0,-6)
  {Doubly lifted\\[2pt] $\PP \in \cP(\cZ)$,
   $\cZ=\cP(\XX)^2$\\
   $\PP$ population law};
\node[level, fill=blue!20] (quant) at (0,-9)
  {Triply quantized\\[2pt]
   $\hat\PP^{\ell,n} \in \cP(\cZ^{\ell,n})$,\\[0.2em]
   $\underline{U}^{(m)} \in \UU_m^T$,\\[0.2em]
   $\cZ^{\ell,n} = \cP^{(\ell)}(\XX_n)^2$};

\draw[arrow] (base)  -- (lift1);
\draw[arrow] (lift1) -- (lift2);
\draw[arrow] (lift2) -- (quant);

\node[label, anchor=west] at (2.6,-1.5)
  {Lift to measures:\\ restores Markov property};
\node[label, anchor=west] at (2.6,-4.5)
  {Lift to laws on data sets};
\node[label, anchor=west] at (2.6,-7.5)
  {Quantize $\XX$ (state space with positional encodings incorporated),\\ $\UU$ (weight/action space for one layer),\\
  $\cP(\XX)$ (measure-state space lifted from particles)};

\end{tikzpicture}
\caption{The hierarchy of models. Each downward arrow represents a structural operation: the first two restore Markovian property and enable a robust formulation, respectively. The third enables us to find concentration inequalities, and consequently, finite-sample generalization bounds. Finally, Theorem~\ref{thm:main} transfers bounds from the bottom level back to the top.}
\label{fig:hierarchy}
\end{figure}

\subsection{Notation} The list of symbols used to describe the particle-level dynamics of Transformers.
\begin{itemize}
    \item $x^i$: feature vector for the $i^{\rm th}$ token
    \item $y^i$: label vector for the $i^{\rm th}$ token
    \item $W_t$, $A_t$, $b_t$: weights for the $t^{\text{th}}$ feed-forward layer
    \item $Q_t$, $K_t$, $V_t$: key, query and value matrices for the $t^{\text{th}}$ self-attention layer
    \item $T$: number of layers
    \item $\cT := \{0,1,\ldots,T-1\}$: layer indexing set (time-horizon)
    \item $\overline{\cT} := \cT\cup\{T\}$: layer indexing set including the terminal layer
    \item $\mathrm{S}$: state space of tokens
    \item $\mathrm{PE}_N$: positional encoding set
    \item $\cD$: generic data set
    \item $x_t^{i,k}$: tokens passing through the $t^{\text{th}}$ layer, placed at the $i^{\text{th}}$ position of the $k^{\text{th}}$ sequence in data set
\end{itemize}

\section{The Doubly Lifted Model of Transformers}\label{sec:formulation}
In this section, we describe the doubly lifted model introduced in \cite[Section 5]{akman2026optimal}. Transformers can be modeled at three different levels with increasing generality. In the base case, the particle-wise interactions can be described in continuous-time \cite[Equation 2.3]{geshkovski2025mathematical} or in discrete-time \cite[Section 2]{akman2026optimal}. However, such representation does not enjoy the Markovian property since the distribution of each particle depends on the mean-field term. Using measure-space lifting, we recover the Markovian property for empirical measure-representation for any ensemble of particles given as inputs to a Transformer. To investigate robustness properties of Transformers, it is easier to work with doubly lifted model, which is to be specified in this part.

\subsection{Dataset induced particle-level dynamics}
Before describing the model, as a reference, we here give a legend of the notation and conventions used throughout in this paper.

Let $\mathrm{S} \subset \RR^d$ be the particles' state space for some $d\in\NN$. We work with Transformers of fixed input sequence length $N\in\NN$, and of number of layers $T\in\NN$. The indexing set of particles is denoted by $\cN$, {\rm i.e.} $\cN := \{1,\ldots, N\}$. Moreover, $\cT = \{0,\ldots, T-1\}$ and $\overline{\cT} = \cT\cup\{T\}$ are used heavily in this paper in order to denote the index set of layers except the terminal layer and all layers, respectively. For each $i\in\cN$, the particle $x^i = (x_t^i)_{t\in\overline{\cT}}$'s evolution is governed by
\begin{equation}\label{eq:dynamics}
    \begin{cases}
        \displaystyle x_{t+1}^i = W_t\,\sigma(A_tx_t^i + b_t) + \sum_{j=1}^N \frac{ \e^{\beta\inner{Q_tx_t^i}{K_tx_t^j}}}{\sum_{j^\prime=1}^N\e^{\beta\inner{Q_tx_t^i}{K_tx_t^{j^\prime}}}}V_tx_t^j & \text{where } t\in\cT \\
        \displaystyle x^i_0 = x^i \in \mathrm{S}.
    \end{cases}
\end{equation}

Here $\sigma:\RR^{d_1}\!\rightarrow\!\RR^{d_1}$, $d_1\in\NN$, is a $1$-Lipschitz function, usually referred to as an {\it activation function}. The weights $W_t,A_t,b_t,Q_t,K_t,V_t$ are accessible by entire ensemble at each time $t\in\cT$, and $W_t\in\RR^{d\times d_1}$, $A_t\in\RR^{d_1\times d}$, $b_t\in\RR^{d_1}$, $Q_t, K_t\in\RR^{d_2\times d}$, and $V_t\in\RR^{d\times d}$ for some $d_1,d_2\in\NN$. For brevity, $U_t = (W_t,A_t,b_t,Q_t,K_t,V_t)$.

We also incorporate positional encodings to this system for preventing any loss of positional information when the measure lifting is applied. Denoting the positional encodings' set by $\mathrm{PE}_N$, we associate particle $x^i$ with the label $i/N$ for each $i\in\cN$. In other words, $\mathrm{PE}_N := \{i/N : i\in\cN\}$. From here, the augmented dynamics can be specified as follows:
\begin{equation}\label{eq:augdynamics}
    \begin{aligned}
        &X_{t+1}^i := (p_i,x_{t+1}^i) = \left(p_i,W_t\,\sigma(A_tx_t^i + b_t) + \sum_{j=1}^N \frac{ \e^{\beta\inner{Q_tx_t^i}{K_tx_t^j}}}{\sum_{j^\prime=1}^N\e^{\beta\inner{Q_tx_t^i}{K_tx_t^{j^\prime}}}}V_tx_t^j\right)\\
        &X_0^i = (p_i,x^i_0) = (p_i,x^i) \in \mathrm{PE}_N\times\mathrm{S}
    \end{aligned}
\end{equation}
We define the state space of augmented particles by $\XX := \mathrm{PE}_N\times\mathrm{S}$.

\subsection{Lifted dynamics} Let $\cD := \{(\mathbf{x}^k, \mathbf{y}^k): k=1,2,\ldots,K\}\subseteq\mathrm{S}^N\times\mathrm{S}^N$ be a data set, for some $K\in\NN$. We denote a generic feature vector by $\mathbf{x}^k := (x^{1,k},\ldots,x^{N,k})$ and $\mathbf{y}^k := (y^{1,k},\ldots,y^{N,k})$. These ensembles of particles give rise to empirical measures
\[
    \mu_0^k := \frac{1}{N}\sum_{i=1}^N\delta_{(p_i,x^{i,k})} \text{ and } \nu^k = \frac{1}{N}\sum_{i=1}^N\delta_{(p_i,y^{i,k})}.
\]
Then $\mu^k = (\mu_t^k)_{t\in\overline{\cT}}$ denotes the measure process where the underlying particles evolve with \eqref{eq:augdynamics}. It is possible to formulate this evolution at the measure-level. Before that we define the function $f:\XX\times\UU\times\cP(\XX)\to\XX$ for all $(p,x)\in\XX$, $U = (W, A, b,$ $ Q, K, V) \in\UU$, and $\mu\in\cP(\XX)$ by
\begin{equation}\label{eq:McKean---Vlasov}
    f((p, x), U, \mu) := \left(p,  W\sigma(Ax + b) + \dfrac{1}{\int_\mathrm{S} \mu_\mathrm{S}(\de z)\,\e^{\beta\inner{Qx}{Kz}}}\int_\mathrm{S} \mu_\mathrm{S}(\de \tilde{z})\,\e^{\beta\inner{Qx}{K\tilde{z}}}\,V\tilde{z}\right),
\end{equation}
which models the flow of a single particle. Here, $\mu_{\mathrm{S}}$ denotes the marginal on $\mathrm{S}$. In particular, taking $(p,x) = (p_i, x_t^{i,k}) = X_t^{i,k}$, $U_t\in\UU$ as in \eqref{eq:augdynamics}, and $\mu_t^k = \frac{1}{N}\sum_{i=1}^N\delta_{(p_i,x_t^{i,k})}$ for all $i\in\cN$, $k\in\cK$, and $t\in\cT$, $f$ satisfies
\[
    f(X_t^{i,k}, U_t, \mu_t^k) = X_{t+1}^{i,k}.
\]
Then the measure evolution is given by the map $\Phi:\cP(\XX)\times\UU\to\cP(\XX)$ for all $\mu\in\cP(\XX)$ and $U\in\UU$ by calculating the following push-forward measure
\begin{equation}\label{eq:measure-evolution}
    \Phi(\mu, U) := \mu\circ T_{\mu, U}^{-1}
\end{equation}
where $T_{\mu, U}:\XX\to\XX$ is given by $T_{\mu, U}(X) := f(X,U,\mu)$ for all $X\in\XX$. By a change of variables argument, it is straightforward to obtain the relation $\mu_{t+1}^k = \Phi(\mu_t^k, U)$ for any $t\in\cT$, $k=1,\ldots,K$ and $U\in\UU$. Such relation describes a discrete-time Fokker--Planck type of evolution. A multi-variable generalization of $\Phi$ as $\boldsymbol{\Phi}:\cP(\XX)^K\times\UU\to\cP(\XX)^K$ is given, for all $(\mu^1,\ldots,\mu^K)\in\cP(\XX)^K$ and $U\in\UU$, by
\begin{equation}\label{eq:ensemble-dynamics}
    \boldsymbol{\Phi}(\mu^1,\ldots,\mu^K; U) = (\Phi(\mu^1, U), \ldots, \Phi(\mu^K, U)).
\end{equation}

For ease, for all $t\in\overline{\cT}$, we also define
\begin{equation}\label{eq:t-fold-map}
    \Phi_t(\mu, \underline{U}_{0:t-1}) := \Phi(\Phi(\cdots\Phi(\Phi(\mu, U_0), U_1), \cdots),U_{t-1})
\end{equation}
which is the $t$-fold application of $\Phi$ under a weight profile $\underline{U}_{0:t-1}\in\UU^t$.

\ka{The training procedure of the lifted model is given in terms of dynamic programming recursions:
\begin{equation}\label{eq:dp}
    \begin{cases}
        \displaystyle\fC_{T}(\mu_T^{1},\ldots,\mu_T^{K}) := \frac{1}{K}\sum_{k=1}^K W_{2,\lambda}(\mu_T^k, \nu^k) \\
        \displaystyle\fC_{t}(\mu_t^{1},\ldots,\mu_t^{K}) := \inf_{U\in\UU}\fC_{t+1}(\boldsymbol{\Phi}(\mu_t^{1},\ldots,\mu_t^{K}; U))
    \end{cases}
\end{equation}
with $\fC_0(\mu_0^1,\ldots,\mu_0^K)$ is the optimal training error. Under mild assumptions, the existence of an optimal closed-loop policy is proven in \cite[Theorem~9]{akman2026optimal} by means of showing that the measurable selection conditions hold (see \cite[Theorem~3.2.1 and §3.3]{hernandez2012discrete}). Since the initial data is fixed before the start of training procedure, the optimal closed-loop policy can be transformed into genuine open-loop policy due to the deterministic nature of Transformer dynamics. The resulting open-loop policies are compatible with the operational properties of Transformers, see \cite[§3.4]{akman2026optimal}.}

\subsection{Doubly lifted model} Given a sequence of data sets $\cD_r := \{(\boldsymbol{x}^{k,r}, \boldsymbol{y}^{k,r}) : k=1,2,\ldots,K_r\}$, where $r\in\NN$, we lift them to the empirical measures of the following form
\[
    \PP_r := \frac{1}{K_r}\sum_{k=1}^{K_r}\delta_{(\mu_0^{k,r}, \nu^{k,r})},
\]
where $\mu_0^{k,r} := \frac{1}{N}\sum_{i=1}^N\delta_{(p_i,x^{i,k,r})}$ and $\nu^{k,r} := \frac{1}{N}\sum_{i=1}^N\delta_{(p_i,y^{i,k,r})}$ for all $r\in\NN$ and all $k\in\{1,\ldots,K_r\} =: \cK_r$. Writing $\cZ := \cP(\XX)\times\cP(\XX)$, we see $\PP_r \in \cP(\cZ)$. Now, we define the cost functional (with an abuse of notation), for all $\underline{U}\in\UU^T$ and $P\in \cP(\cZ)$, as
\[
    V(P, \underline{U}) := \int_{\cZ}P(\de\mu,\de\nu)\,W_{2,\lambda}\bigl(\Phi_T(\mu, \underline{U}), \nu\bigr)^2,
\]
Here we use the notation $\underline{U} = (U_0,U_1,\ldots,U_{T-1})$ $\in\UU^T$ and $\underline{U}|_t = (U_0,\ldots,U_t)$ for all $t\in\overline{\cT}$. In short, $V$ is the value function for the empirical measure of a given data set that passes the first marginal through the Transformer flow using a trajectory of control actions, and compares the resulting distribution with the second marginal. The cost functional here is defined by $\fc((\mu,\nu), \underline{U}^T) := W_{2,\lambda}(\Phi_T(\mu, \underline{U}), \nu)^2$ for all $(\mu,\nu) \in \cZ$ and all $\underline{U}\in\UU^T$. For an empirical measure, for example considering $\PP_r$, the value function collapses to
\[
    V(\PP_r, \underline{U}) := \frac{1}{K_r}\sum_{k=1}^{K_r} W_{2,\lambda}\bigl(\Phi_T(\mu_0^{k,r}, \underline{U}^T), \nu^{k,r}\bigr)^2,
\]
\ka{which is equivalent to the dynamic programming based training procedure described in \eqref{eq:dp}.}

\begin{assumption}
\label{assumption:compactness}
    The particle state space $\mathrm{S}$ and the action space $\UU$ are compact. Moreover, Transformer dynamics are forward-invariant, {\rm i.e.} if $\mathsf{T}_{\underline{U}}$ is a Transformer constructed via the weights $\underline{U}\in\UU^T$, then $\mathsf{T}_{\underline{U}}({\rm S}^N)\subseteq{\rm S}^N$.
\end{assumption}

In applications, the compactness of the state space can be substituted with projections onto a compact and convex set, which is more common. Alternatively, the compactness of $\UU$ is replaced with a regularity term on the cost. However, these do not add significant deviations in the analysis and for the simplicity of presentation, we impose compactness on the action space.

\begin{remark}
The topologies on $\cP(\XX)$ and $\cP(\cZ)$ play an important role in our analysis. We endow $\cP(\XX)$ with the weak$^*$ topology induced by the space of continuous, bounded, and real-valued functions on $\XX$, denoted $\cC_b(\XX)$. So, $\mu_n\xrightarrow{\mathrm{w}^*} \mu$ in $\cP(\XX)$ if, for all $g\in\cC_b(\XX)$, we have
\[
    \int_\XX g\,\de\mu_n \longrightarrow \int_\XX g\,\de\mu.
\]
Under Assumption~\ref{assumption:compactness}, $(\cP(\XX), \mathrm{w}^*)$ is metrizable by $p$-Wasserstein family, and even better, is compact. Choosing a {\it positional enforcement constant} $\lambda>0$, see \cite[Section 3]{akman2026optimal}, we write the correct analogue of $p$-Wasserstein distance for $\cP(\XX)$: For all $\mu,\nu\in\cP(\XX)$
\begin{equation}\label{eq:was-2}
    W_{2,\lambda}(\mu,\nu) := \left(  \inf_{\pi\in\mathrm{C}(\mu,\nu)}\int_{\XX\times\XX}\pi(\de p_x, \de x; \de p_y, \de y)\,c_\lambda(p_x, x; p_y, y)^2 \right)^{\frac{1}{2}}
\end{equation}
for all $\mu,\nu\in\cP(\XX)$, where $\mathrm{C}(\mu,\nu) := \{\pi\in\cP(\XX\times\XX) : \pi(\cdot \times \XX) = \mu(\cdot), \pi(\XX\times\cdot) = \nu(\cdot)\}$ and $c_\lambda(p_x, x; p_y, y) := \sqrt{\|x - y\|_{\mathrm{S}}^2 + \lambda|p_x - p_y|^2}$. For large enough $\lambda>0$, the changes in the term $\lambda|p_x - p_y|^2$ dominates the changes in $\|x - y\|_{\mathrm{S}}^2$. Therefore, the minimization problem in \eqref{eq:was-2} prioritizes to minimize $\lambda|p_x - p_y|^2$. As a consequence, $W_{2,\lambda}$ matches the positional encodings first.

Since $\cP(\XX)$ is compact under Assumption~\ref{assumption:compactness}, $\cP(\cZ)$ is compact as well in the product weak$^*$ topology, for which we use the metric
\begin{equation}
    \cW_1(P, Q) := \inf_{\Pi\in\mathrm{C}(P,Q)}\int_{\cZ\times\cZ}\!\Pi(\de(\mu_P,\nu_P), \de(\mu_Q,\nu_Q))\bigg(W_{2,\lambda}(\mu_P, \mu_Q) + W_{2,\lambda}(\nu_P, \nu_Q)\bigg)
\end{equation}
for all $P,Q\in\cP(\cZ)$, which generates the same topology. One other reason for the choice of $\cW_1$ is due to two reasons: (i) $\cW_1$ metrizes the weak$^*$-topology on $\cP(\cZ)$ \cite[Theorem~6.9]{villani2009optimal}, (ii) $\cW_1$ has an alternative representation (Kantorovich---Rubinstein duality \cite[Remark~6.5]{villani2009optimal}), which is utilized in Lemma~\ref{lem:lipschitz}.
\end{remark}

In light of \cite[Theorem 16]{akman2026optimal}, we know that as the empirical measures raised by the respective datasets converge to the true distribution, the optimal value function converges as well. In this section, we derive a Lipschitz-type estimates for the value functions so that we can quantitatively characterize the generalization behavior. In our results in this section, we utilize the following result.

\begin{lemma}{\cite[Lemma~15]{akman2026optimal}}\label{lem:value-function-is-continuous}
    Under Assumption~\ref{assumption:compactness}, the value function $V$ is jointly continuous on $\cP(\cZ)\times\UU^T$.
\end{lemma}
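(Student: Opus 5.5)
We prove joint continuity of $V$ on $\cP(\cZ)\times\UU^T$ by reducing it to two facts: the integrand $\fc$ is jointly continuous, and integration of a uniformly continuous integrand against weakly converging laws behaves well. Throughout we work under Assumption~\ref{assumption:compactness}, so $\XX$, $\cP(\XX)$, $\cZ$, $\UU^T$ and $\cP(\cZ)$ are all compact metric spaces.

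\textbf{Step 1: continuity of $\Phi$.} We show that $\Phi:\cP(\XX)\times\UU\to\cP(\XX)$ is jointly continuous.
\begin{itemize}
\item On the compact set $\mathrm{S}\times\UU$, the attention weights $\e^{\beta\inner{Qx}{Kz}}$ are continuous in $(x,z,Q,K)$ and bounded below by a positive constant.
\item Hence the normalizer $\int\mu_\mathrm{S}(\de z)\,\e^{\beta\inner{Qx}{Kz}}$ is bounded away from zero.
\item If $(\mu_n,U_n)\to(\mu,U)$, the integrands converge uniformly on compacts. Combining uniform convergence with weak$^*$ convergence of $\mu_n$ gives $f(\cdot,U_n,\mu_n)\to f(\cdot,U,\mu)$ uniformly on $\XX$.
\item For $g\in\cC_b(\XX)$, $g$ is uniformly continuous on the compact $\XX$. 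We split
\[
\int g\circ T_{\mu_n,U_n}\,\de\mu_n-\int g\circ T_{\mu,U}\,\de\mu
\]
into a term controlled by $\sup_X|g\circ T_{\mu_n,U_n}(X)-g\circ T_{\mu,U}(X)|$ and a term $\int g\circ T_{\mu,U}\,\de(\mu_n-\mu)$. The first vanishes by uniform convergence, the second by weak$^*$ convergence. This gives $\Phi(\mu_n,U_n)\to\Phi(\mu,U)$.
\end{itemize}
By induction over the layers, the iterated map $\Phi_T$ is jointly continuous on $\cP(\XX)\times\UU^T$.

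\textbf{Step 2: continuity of $\fc$.} $W_{2,\lambda}$ is a metric compatible with the weak$^*$ topology on the compact $\cP(\XX)$, so it is continuous. Composing with Step 1, $\fc((\mu,\nu),\underline{U})=W_{2,\lambda}(\Phi_T(\mu,\underline U),\nu)^2$ is jointly continuous on the compact space $\cZ\times\UU^T$. It is therefore bounded and uniformly continuous there.

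\textbf{Step 3: continuity of $V$.} Let $(P_n,\underline U_n)\to(P,\underline U)$ and write
\[
|V(P_n,\underline U_n)-V(P,\underline U)|\le \sup_{z\in\cZ}|\fc(z,\underline U_n)-\fc(z,\underline U)| + \Big|\int \fc(\cdot,\underline U)\,\de P_n-\int \fc(\cdot,\underline U)\,\de P\Big|.
\]
The first term tends to zero by uniform continuity of $\fc$ on $\cZ\times\UU^T$. The second tends to zero because $\fc(\cdot,\underline U)\in\cC_b(\cZ)$ and $P_n\to P$ weakly$^*$, which is metrized by $\cW_1$.

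\textbf{Main obstacle.} The only delicate point is Step 1: the push-forward has a measure-dependent map $T_{\mu,U}$, so both the map and the measure move at once. The argument above handles this through the uniform convergence of $T_{\mu_n,U_n}$, and that uniform convergence rests on the normalizer of the softmax being bounded away from zero on compacts.
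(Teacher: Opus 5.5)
Your proof is correct. The paper does not prove this lemma itself; it imports it from \cite[Lemma~15]{akman2026optimal}, so there is no in-paper argument to compare against. Your chain $f\to\Phi\to\Phi_T\to\fc\to V$ uses the ingredients the paper relies on elsewhere: joint continuity of $f$ on a compact domain (cf.\ the proof of Lemma~\ref{lem:lipU}), compactness of $\cP(\XX)$, and weak$^*$ convergence in $\cP(\cZ)$.

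One step is stated too quickly. You assert that the attention integral $\int h(x,z)\,(\mu_{n,\mathrm{S}}-\mu_{\mathrm{S}})(\de z)$, with $h(x,z)=\e^{\beta\inner{Qx}{Kz}}Vz$, tends to zero \emph{uniformly in} $x$. Weak$^*$ convergence only gives convergence for each fixed test function, so uniformity needs one more line. You can argue as follows. Since $h$ is uniformly continuous on $\mathrm{S}\times\mathrm{S}$ (indeed Lipschitz in $z$ uniformly in $x$), the family $\{h(x,\cdot)\}_{x\in\mathrm{S}}$ is uniformly bounded and equicontinuous. Weak convergence is uniform over such a family, either by Kantorovich--Rubinstein or by a finite $\delta$-net in $x$. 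Alternatively, prove joint continuity of $f$ on the compact set $\XX\times\UU\times\cP(\XX)$ and use its uniform continuity to get $\sup_X|f(X,U_n,\mu_n)-f(X,U,\mu)|\to 0$ directly.

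The paper's own machinery also gives a different, quantitative route. Combining Lemma~\ref{lem:lipschitz} with Lemma~\ref{lem:lipU} yields $|V(P,\underline{U})-V(Q,\underline{U}')|\le L\,\cW_1(P,Q)+B_T\|\underline{U}-\underline{U}'\|_{\UU^T}$, which is joint Lipschitz continuity and stronger than the statement. That route, however, rests on the Lipschitz estimate for $\Phi$ (Lemma~\ref{cor:phi-lipschitz}) and on the weight-Lipschitz bound \eqref{eq:f-lip-U}. Your qualitative argument needs only continuity and compactness, which is exactly what the lemma asserts.
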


Later in this paper, we develop qualitative bounds for generalization error. To achieve those, we need some Lipschitz-type estimates for the value function. The subsequent result requires the following estimate.

\begin{lemma}{\cite[Corollary~19]{akman2026optimal}}\label{cor:phi-lipschitz}
    For any $\boldsymbol{\mu},\boldsymbol{\nu}\in\cP(\XX)^K$ and $U\in\UU$, we have
    \begin{equation}\label{eq:estimate-1}
        \sum_{k=1}^K W_{2,\lambda}(\Phi(\mu^k,U), \Phi(\nu^k, U)) \leq L_\Phi \sum_{k=1}^KW_{2,\lambda}(\mu^k,\nu^k)
    \end{equation}
    for some $L_\Phi \geq 0$. We here write $\boldsymbol{\mu} = (\mu^1,\ldots,\mu^K)$ and $\boldsymbol{\nu} = (\nu^1,\ldots,\nu^K)$.
\end{lemma}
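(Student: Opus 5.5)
The idea is to prove the single-measure estimate $W_{2,\lambda}(\Phi(\mu,U),\Phi(\nu,U))\le L_\Phi W_{2,\lambda}(\mu,\nu)$ with a constant $L_\Phi$ that does not depend on $U\in\UU$. Summing this over $k=1,\ldots,K$ then gives \eqref{eq:estimate-1} at once. The uniformity in $U$ will come from Assumption~\ref{assumption:compactness}. Since $\mathrm{S}$ and $\UU$ are compact, there are constants $R,M$ with $\|z\|\le R$ for all $z\in\mathrm{S}$ and $\|W\|,\|A\|,\|Q\|,\|K\|,\|V\|\le M$ for all $U\in\UU$. All constants below will depend only on $R,M,\beta,\lambda$.

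\emph{Coupling step.} Fix $U$, and let $\pi\in\mathrm{C}(\mu,\nu)$ be an optimal coupling for \eqref{eq:was-2}; it exists because $\XX$ is compact. Then $(T_{\mu,U}\times T_{\nu,U})_\#\pi$ is a coupling of $\Phi(\mu,U)$ and $\Phi(\nu,U)$ by \eqref{eq:measure-evolution}. Hence
\[
W_{2,\lambda}(\Phi(\mu,U),\Phi(\nu,U))^2\le\int \pi(\de p_x,\de x;\de p_y,\de y)\,c_\lambda\bigl(f((p_x,x),U,\mu),f((p_y,y),U,\nu)\bigr)^2 .
\]
The map $f$ leaves the positional coordinate unchanged, so the term $\lambda|p_x-p_y|^2$ passes through as it is. It remains to bound the feature component. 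For this I split $f$ into the feed-forward part and the attention part.

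\emph{Feed-forward part.} Since $\sigma$ is $1$-Lipschitz, $\|W\sigma(Ax+b)-W\sigma(Ay+b)\|\le M^2\|x-y\|$.

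\emph{Attention part.} Write
\[
g(x,\mu)=\frac{\int \e^{\beta\inner{Qx}{Kz}}Vz\,\mu_{\mathrm S}(\de z)}{\int \e^{\beta\inner{Qx}{Kz}}\,\mu_{\mathrm S}(\de z)} .
\]
This is the main step. On the compact domain, the integrands $z\mapsto \e^{\beta\inner{Qx}{Kz}}$ and $z\mapsto \e^{\beta\inner{Qx}{Kz}}Vz$ are Lipschitz in both $x$ and $z$, with constants depending only on $R,M,\beta$. The denominator is bounded below by $\e^{-\beta M^2R^2}>0$. The quotient rule then gives
\[
\|g(x,\mu)-g(y,\nu)\|\le L_1\|x-y\|+L_2\,W_1(\mu_{\mathrm S},\nu_{\mathrm S}).
\]
The $\mu$-dependence is handled by Kantorovich--Rubinstein duality applied to the Lipschitz integrands. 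Since $c_\lambda\ge\|x-y\|$, projecting $\pi$ onto the $\mathrm{S}$-coordinates gives $W_1(\mu_{\mathrm S},\nu_{\mathrm S})\le W_2(\mu_{\mathrm S},\nu_{\mathrm S})\le W_{2,\lambda}(\mu,\nu)$. The care needed here is that the measure argument of the attention term changes at the same time as the particle. This is why the coupling must be optimal for $W_{2,\lambda}$, so that the same quantity controls both contributions.

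\emph{Collecting the bounds.} Put $C=M^2+L_1$. Using $(a+b)^2\le 2a^2+2b^2$ and integrating against $\pi$ gives
\[
W_{2,\lambda}(\Phi(\mu,U),\Phi(\nu,U))^2\le 2C^2\!\int\!\|x-y\|^2\de\pi+2L_2^2W_{2,\lambda}(\mu,\nu)^2+\lambda\!\int\!|p_x-p_y|^2\de\pi .
\]
The right-hand side is at most $\max(2C^2,1)+2L_2^2$ times $W_{2,\lambda}(\mu,\nu)^2$. Taking $L_\Phi=\sqrt{\max(2C^2,1)+2L_2^2}$, which is independent of $U$, and summing over $k$ proves the lemma.
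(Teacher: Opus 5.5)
Your proposal is correct and follows essentially the same route as the cited proof, \cite[Lemma~18 and Corollary~19]{akman2026optimal}, whose structure the paper's closing remark on constants records. That route is a joint Lipschitz bound for the single-particle map $f$ in the particle and the measure, obtained from the quotient rule with the softmax denominator bounded away from zero. It is then pushed through an optimal coupling using $(a+b)^2\le 2a^2+2b^2$, which is where the paper's $L_\Phi=\sqrt{2}L_f$ comes from, and finally summed over $k$. One minor correction: optimality of $\pi$ is needed only to identify $\int c_\lambda^2\,\de\pi$ with $W_{2,\lambda}(\mu,\nu)^2$, whereas the measure term $W_1(\mu_{\mathrm S},\nu_{\mathrm S})\le W_{2,\lambda}(\mu,\nu)$ holds whichever coupling you push forward.
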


In particular, \eqref{eq:estimate-1} holds when $K=1$ as the proof of it essentially establishes the Lipschitz stability of the map $\Phi$.

\begin{lemma}\label{lem:lipschitz}
    There exists $L > 0$ such that, for all $P,Q\in\cP(\cZ)$, we have $|V(P,\underline{U}) - V(Q, \underline{U})| \leq L \cW_1(P,Q)$ uniformly in $\underline{U}\in\UU^T$.
\end{lemma}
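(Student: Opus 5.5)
Since $V(P,\underline{U}) = \int_{\cZ} \fc((\mu,\nu),\underline{U})\,P(\de\mu,\de\nu)$ is linear in $P$, we reduce the claim to a Lipschitz estimate for the integrand. Concretely, we show that for each fixed $\underline{U}\in\UU^T$ the map $(\mu,\nu)\mapsto \fc((\mu,\nu),\underline{U}) = W_{2,\lambda}(\Phi_T(\mu,\underline{U}),\nu)^2$ is Lipschitz on $\cZ$. The metric on $\cZ$ is $d_\cZ((\mu,\nu),(\mu',\nu')) = W_{2,\lambda}(\mu,\mu') + W_{2,\lambda}(\nu,\nu')$, and we need a Lipschitz constant $L$ independent of $\underline{U}$. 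Kantorovich--Rubinstein duality \cite[Remark~6.5]{villani2009optimal} for $\cW_1$ then gives
\[
|V(P,\underline{U}) - V(Q,\underline{U})| = \left|\int \fc\,\de(P-Q)\right| \le L\,\cW_1(P,Q).
\]
Alternatively, one can avoid duality by taking an optimal coupling $\Pi\in\mathrm{C}(P,Q)$ (it exists by compactness of $\cZ$), writing $V(P,\underline{U})-V(Q,\underline{U}) = \int \bigl(\fc(z,\underline{U})-\fc(z',\underline{U})\bigr)\,\Pi(\de z,\de z')$, and bounding the integrand pointwise.

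For the integrand, we first use the elementary identity $|a^2-b^2| = |a+b|\,|a-b|$. Compactness of $\XX$ (Assumption~\ref{assumption:compactness}) gives a finite diameter $D_\lambda := \sup c_\lambda < \infty$, so every $W_{2,\lambda}$ distance on $\cP(\XX)$ is at most $D_\lambda$. Hence
\[
|\fc((\mu,\nu),\underline{U}) - \fc((\mu',\nu'),\underline{U})| \le 2D_\lambda\,\bigl|W_{2,\lambda}(\Phi_T(\mu,\underline{U}),\nu) - W_{2,\lambda}(\Phi_T(\mu',\underline{U}),\nu')\bigr|.
\]
By the triangle inequality for $W_{2,\lambda}$, the last difference is at most $W_{2,\lambda}(\Phi_T(\mu,\underline{U}),\Phi_T(\mu',\underline{U})) + W_{2,\lambda}(\nu,\nu')$.

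Next, we iterate Lemma~\ref{cor:phi-lipschitz} with $K=1$ along the $T$ layers via \eqref{eq:t-fold-map}. This gives $W_{2,\lambda}(\Phi_T(\mu,\underline{U}),\Phi_T(\mu',\underline{U})) \le L_\Phi^T\,W_{2,\lambda}(\mu,\mu')$. Combining the estimates yields
\[
|\fc(z,\underline{U})-\fc(z',\underline{U})| \le 2D_\lambda\max\{L_\Phi^T,1\}\,d_\cZ(z,z'),
\]
so we may take $L := 2D_\lambda\max\{L_\Phi^T,1\}$, adjusted to be strictly positive if needed.

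The main point to check is that $L_\Phi$ in Lemma~\ref{cor:phi-lipschitz} is uniform over $U\in\UU$. The statement is phrased with a single constant $L_\Phi$ for arbitrary $U$, and the constant should depend only on bounds for $\|W\|,\|A\|,\|b\|,\|Q\|,\|K\|,\|V\|$ over the compact set $\UU$, on $\beta$, and on $\operatorname{diam}\mathrm{S}$. If the cited corollary only gave a $U$-dependent constant, I would first take the supremum over the compact set $\UU$, arguing that the softmax-attention Lipschitz constant is continuous in the weights. Forward invariance in Assumption~\ref{assumption:compactness} is what keeps the iterates $\Phi_t(\mu,\underline{U})$ inside $\cP(\XX)$ with $\XX$ compact, so that both the diameter bound and the per-step Lipschitz constant remain valid at every layer. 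Measurability of $z\mapsto \fc(z,\underline{U})$ is automatic from the continuity in Lemma~\ref{lem:value-function-is-continuous}, or directly from the Lipschitz bound just proved.
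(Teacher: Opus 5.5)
Your proposal is correct and matches the paper's proof step for step. Both arguments factor $|a^2-b^2|$ with a uniform bound on $a+b$, split by the triangle inequality into a $\nu$-term and a $\Phi_T$-term controlled by $L_\Phi^T$, and close with Kantorovich--Rubinstein duality, giving the same constant $L = B_W\max\{1,L_\Phi^T\}$ (the paper's later remark takes $B_W = 2\,\mathrm{diam}(\cP(\XX),W_{2,\lambda})$, i.e.\ your $2D_\lambda$). Your explicit checks that $L_\Phi$ is uniform in $U$ and that forward invariance keeps the iterates in $\cP(\XX)$ are points the paper takes for granted.
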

\begin{proof}
    Fix $\underline{U}\in\UU^T$, then for all $\mu_1,\mu_2\in\cP(\XX)$, note that we have
    \[
        W_{2,\lambda}\bigl(\Phi_T(\mu_1,\underline{U}), \Phi_T(\mu_2,\underline{U})\bigr) \leq L_{\Phi}^TW_{2,\lambda}(\mu_1,\mu_2)
    \]
    by \cite[Corollary 19]{akman2026optimal}. As a second note, for all $\mu_1,\nu_1,\mu_2,\nu_2\in\cP(\XX)$, observe
    \begin{align*}
        \bigl|W_{2,\lambda}(\Phi_T(\mu_1,\underline{U}),\nu_1)^2- W_{2,\lambda}(\Phi_T(\mu_2,\underline{U}),\nu_2)^2\bigr| &\leq B_W\cdot |W_{2,\lambda}(\Phi_T(\mu_1,\underline{U}),\nu_1) \\
        &\hspace{4em}- W_{2,\lambda}(\Phi_T(\mu_2,\underline{U}),\nu_2)|
    \end{align*}
    Here, the basic identity $|a^2-b^2| = |a-b||a+b|$ is utilized. Let $B_W$ be the bound dominating the term $|W_{2,\lambda}(\Phi_T(\mu_1,\underline{U}),\nu_1) + W_{2,\lambda}(\Phi_T(\mu_2,\underline{U}),\nu_2)|$ due to compactness of $\cP(\XX)$ and the continuity of the metric, and $\Phi$.  For $|W_{2,\lambda}(\Phi_T(\mu_1,\underline{U}),\nu_1)- W_{2,\lambda}(\Phi_T(\mu_2,\underline{U}),\nu_2)|$, we decompose as
    \begin{align*}
        |W_{2,\lambda}(\Phi_T(\mu_1,\underline{U}),\nu_1)- W_{2,\lambda}(\Phi_T(\mu_2,\underline{U}),\nu_2)| &\leq |W_{2,\lambda}(\Phi_T(\mu_1,\underline{U}),\nu_1)\\
        &\hspace{3.5em}- W_{2,\lambda}(\Phi_T(\mu_1,\underline{U}),\nu_2)|\\
        &\quad+|W_{2,\lambda}(\Phi_T(\mu_1,\underline{U}),\nu_2)\\
        &\hspace{3.5em}- W_{2,\lambda}(\Phi_T(\mu_2,\underline{U}),\nu_2)|\\
        &\leq W_{2,\lambda}(\nu_1,\nu_2)\\
        &\hspace{3.5em}+ W_{2,\lambda}(\Phi_T(\mu_1,\underline{U}),\Phi_T(\mu_2,\underline{U}))
        \intertext{since $\Phi_T$ is Lipschitz, we have}
        &\leq W_{2,\lambda}(\nu_1,\nu_2) + L_{\Phi}^TW_{2,\lambda}(\mu_1,\mu_2)\\
        &\leq\max\{1,L_{\Phi}^T\}(W_{2,\lambda}(\mu_1,\mu_2)\\
        &\hspace{7em}+ W_{2,\lambda}(\nu_1,\nu_2))
    \end{align*}

    Getting back to the claim, the difference between value functions realized with different laws, but with same control trajectories is bounded as follows
    \begin{align*}
        |V(P,\underline{U}) - V(Q, \underline{U})| &= \bigg|\int P(\de\mu,\de\nu)\,W_{2,\lambda}(\Phi_T(\mu,\underline{U}),\nu)^2\\
        &\hspace{5em}- \int Q(\de\mu,\de\nu)\,W_{2,\lambda}(\Phi_T(\mu,\underline{U}),\nu)^2\bigg|\\
        &=\bigg|\int (P-Q)(\de\mu,\de\nu)\,W_{2,\lambda}(\Phi_T(\mu,\underline{U}),\nu)^2\bigg|\\
        &\leq B_W\max\{1,L_{\Phi}^T\}\cW_1(P,Q) =: L\cW_1(P,Q).
    \end{align*}
    Since the cost function is Lipschitz, the inequality in the last line is due to Kantorovich---Rubinstein duality, see Remark~6.5 in \cite{villani2009optimal}.
\end{proof}

\section{Quantitative Results for the Triply Quantized Model}\label{sec:quantized}
This section presents quantitative generalization bounds for the quantized training scheme. We first adapt the quantized model in \cite{akman2026optimal} to the doubly lifted model used in this paper by quantizing state, action and measure-state spaces. Then, we prove a concentration inequality for the empirical quantized double lifted law. Finally, we derive explicit generalization bounds at the quantized level.
\subsection{Adaptation of triply quantized setting for doubly lifted model} We merge the settings of Sections 4 and 5 in \cite{akman2026optimal}. Under Assumption~\ref{assumption:compactness}, due to the total boundedness of $\mathrm{S}$, for any $n\in\NN$, there is a finite set $\{s^{(n), j}: j\in J(n)\} \subseteq \mathrm{S}$ such that
\[
    \min_{j\in J(n)} \|x - s^{(n),j}\| < \frac{1}{n}
\]
for all $x\in\mathrm{S}$. We denote the set of quantized states by $\mathrm{S}_n := \{s^{(n), j}: j\in J(n)\}$ for all $n\in\NN$. The positional encodings are incorporated into state space, that is $\XX_n := \mathrm{PE}_N \times \mathrm{S}_n$. We utilize the {\it position-sensitive hard quantizer} $Q_n: \XX \to \XX_n$ as follows
\begin{equation}
    Q_n(p,x) := (p, \operatorname{argmin}_{s\in\mathrm{S}_n}\|x-s\|), \quad (p,x)\in\XX,
\end{equation}
where the ties are broken so that the measurability is preserved.

We also quantize the action space $\UU$ with respect to the norm $\|U\|_\UU^2 = \Tr(WW^T) + \Tr(AA^T) + \Tr(bb^T) + \Tr(QQ^T) + \Tr(KK^T) + \Tr(VV^T)$ for all $U=(W,A,b,Q,K,V)\in\UU$. By Assumption~\ref{assumption:compactness}, we also know that $\UU$ is compact, as a result, for any $m\in\NN$, there is a finite set $\UU_m := \{U^1, \ldots, U^{(M(m))}\} \subseteq \UU$ such that
\begin{equation}
    \min_{j=1,\ldots,M(m)}\|U^{(m),j}-U\|_\UU \leq \frac{1}{m}
\end{equation}
for all $U\in\UU$.

We see the resulting space as follows
\begin{equation}
\cP(\XX_n) = \{p = (p_a)_{a=1}^{|\XX_n|}\in\RR^{|\XX_n|} : p_a \geq 0 \text{ and } \sum_{a=1}^{|\XX_n|} p_a = 1\}.
\end{equation}
On the measure-level, this problem is finite-dimensional, but lacking compactness. For computational feasibility, we still need to quantize the measure as well. In this regard, we use the algorithm in \cite{reznik2011algorithm} and the \emph{reconstruction points} of $\cP(\XX_n)$ as the set
\begin{equation}
    \cP^{(\ell)}(\XX_n) := \big\{(p_a)_{a=1}^{|\XX_n|}: p_a = \frac{r_a}{\ell}, r_a \in \NN_0, \sum_{a=1}^{|\XX_n|}r_a = \ell\big\},
\end{equation}
which is a set of cardinality $\binom{\ell + |\XX_n| - 1}{|\XX_n| - 1}$. Let $R_\ell: \cP(\XX_n)\to\cP^{(\ell)}(\XX_n)$ be the quantizer given by \cite[Algorithm 1]{reznik2011algorithm}. Then $R_\ell$ maps each $\hat{\mu}\in\cP(\XX_n)$ to the closest element in $\cP^{(\ell)}(\XX_n)$ in the sense of $W_{2,\lambda}$ metric.

Finally, we define $\cZ^{\ell,n} := \cP^{(\ell)}(\XX_n)\times\cP^{(\ell)}(\XX_n)$ as the base space for the doubly lifted model. Then the measures we obtain live in $\cP(\cZ^{\ell,n})$. Let $\hat{\PP}_r$ be the empirical measure raised by the dataset $\cD_r := \{(\boldsymbol{x}^{k,r}, \boldsymbol{y}^{k,r}) : k=1,2,\ldots,K_r\}$ as follows:
\[
    \hat\PP_r^{\ell,n} := \frac{1}{K_r}\sum_{k=1}^{K_r}\delta_{\bigl(R_\ell(\mu^k\circ Q_n^{-1}),\;R_\ell(\nu^k\circ Q_n^{-1})\bigr)}.
\]
The weak$^*$ topology of $\cP(\cZ^{\ell,n})$ can be characterized by $W_1$ distance, which is defined for all $\hat P,\hat Q\in\cP(\cZ^{\ell,n})$ by
\[
    \cW_1^{\ell,n}(\hat P,\hat Q) := \inf_{\hat{\pi}\in\hat\Pi(\hat P,\hat Q)}\int_{\cZ^{\ell,n}}\hat\pi\bigl(\de(\hat\mu_1, \hat\nu_1), \de(\hat\mu_2,\hat\nu_2)\bigr)\,d_1^{\,\ell,n}((\hat\mu_1, \hat\nu_1), (\hat\mu_2,\hat\nu_2)),
\]
induced by the metric $d_1^{\ell,n}\bigl((\hat\mu_1, \hat\nu_1), (\hat\mu_2,\hat\nu_2)\bigr) = W_{2,\lambda}(\hat\mu_1,\hat\mu_2) + W_{2,\lambda}(\hat\nu_1,\hat\nu_2)$ on $\cZ^{\ell,n}$. The set $\hat\Pi(\hat P,\hat Q)$ stands for all couplings of $\hat P$ and $\hat Q$. The flow map for the quantized model is given by
\begin{equation}\label{eq:quantized-flow}
    \Phi^{\ell,n}(\hat\mu, U^{(m)}) := R_\ell(\Phi(\hat{\mu}\circ\iota^{-1}, \underline{U}^{(m)})), \quad\forall\hat\mu\in\cP^{(\ell)}(\XX_n),\;\forall U^{(m)}\in\UU_m.
\end{equation}
Note that since $\UU_m\subset\UU$, this is a well-defined function. Also, $T$-fold application of it is defined similar to \eqref{eq:t-fold-map}.

\subsection{Value function and robust objective} For any $\hat P\in\cP(\cZ^{\ell,n})$ and any $\underline{U}^{(m)}\in\UU_m^T$, we define the corresponding value function
\[
    V^{\ell,n}(\hat P, \underline{U}^{(m)}) := \int_{\cZ^{\ell,n}}\!\!\!\hat P(\de\hat\mu,\de\hat\nu)\,W_{2,\lambda}(\Phi_{T}^{\ell,n}(\hat\mu,  \underline{U}^{(m)}), \hat\nu)^2.
\]

\ka{Let $\hat\PP_r^{\ell,n}$ be the empirical measure on pairs of empirical measures, raised to a probability measure by an i.i.d. sample of $K_r$ observations obtained from the true law, and quantized by $R_\ell$ and $Q_n$.} 

For this setup, we define the ambiguity set
\[
    A_\rho(\hat\PP_r^{\ell,n}) := \bigl\{\hat P\in\cP(\cZ^{\ell,n}) : \cW_1^{\ell,n}(\hat P, \hat\PP_r^{\ell,n}) \leq \rho\bigr\}
\]
for $\rho>0$. Our robust objective is now
\begin{equation}
    \inf_{\underline{U}^{(m)}\in\,\UU_m^T}\sup_{P\in A_\rho(\hat\PP_r^{\ell,n})}V^{\ell,n}(P,\underline{U}^{(m)}).
\end{equation}

\ka{We write 
\begin{equation}\label{eq:quantized-true}
    \hat\PP^{\ell,n}(\de\hat\mu, \de\hat\nu) := \PP(\de R_\ell(\mu\circ Q_n^{-1}), \de R_\ell(\nu\circ Q_n^{-1}))
\end{equation}
for the quantized version of the true law, from which we sample data for the following concentration inequality.}

\begin{proposition}\label{prop:concentration}
    \ka{Let the quantized samples
    \[
        (\hat\mu^{1,(\ell,n)},\hat\nu^{1,(\ell,n)}),(\hat\mu^{2,(\ell,n)}\hat\nu^{2,(\ell,n)}),\ldots,(\hat\mu^{K_r,(\ell,n)},\hat\nu^{K_r,(\ell,n)})
    \]
    be i.i.d. with law $\hat{\PP}^{\ell,n}$ for some $\ell,n\in\NN$. Let $\hat\PP_r^{\ell,n} := \frac{1}{K_r}\sum_{k=1}^{K_r}\delta_{(\hat\mu^{k,(\ell,n)},\hat\nu^{k,(\ell,n)})}$. Then, under Assumption~\ref{assumption:compactness}, we have}   
    \[
        \Prob\bigg\{\cW_1^{\ell,n}(\hat{\PP}_r^{\ell,n}, \hat{\PP}^{\ell,n}) - \EE\cW_1^{\ell,n}(\hat{\PP}_r^{\ell,n}, \hat{\PP}^{\ell,n}) > \epsilon\bigg\} \leq \exp\left(-\frac{2K_r\epsilon^2}{\operatorname{diam}(\cZ^{\ell,n})^2}\right).
    \]
    for all $\epsilon > 0$.
\end{proposition}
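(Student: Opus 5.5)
This is a direct application of McDiarmid's bounded differences inequality. Regard
\[
F(z_1,\ldots,z_{K_r}) := \cW_1^{\ell,n}\Bigl(\tfrac{1}{K_r}\textstyle\sum_{k=1}^{K_r}\delta_{z_k},\,\hat\PP^{\ell,n}\Bigr),\qquad z_k=(\hat\mu^{k,(\ell,n)},\hat\nu^{k,(\ell,n)})\in\cZ^{\ell,n},
\]
as a function of the $K_r$ i.i.d. samples. Measurability is automatic: $\cZ^{\ell,n}$ is a finite set, since $\cP^{(\ell)}(\XX_n)$ has finite cardinality. Also, $\operatorname{diam}(\cZ^{\ell,n})$ under $d_1^{\ell,n}$ is finite. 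The same conclusion would follow from compactness of $\cP(\XX)$ under Assumption~\ref{assumption:compactness}.

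The key step is the bounded difference property. Suppose two sample vectors differ only in the $k$-th coordinate, $z_k$ versus $z_k'$. By the triangle inequality for $\cW_1^{\ell,n}$,
\[
|F(\ldots,z_k,\ldots)-F(\ldots,z_k',\ldots)|\le \cW_1^{\ell,n}\Bigl(\tfrac{1}{K_r}\textstyle\sum_j\delta_{z_j},\,\tfrac{1}{K_r}\textstyle\sum_j\delta_{z_j'}\Bigr).
\]
To bound the right-hand side, I use the coupling that matches $z_j$ with itself for $j\neq k$ and $z_k$ with $z_k'$. This coupling has cost $\frac{1}{K_r}d_1^{\ell,n}(z_k,z_k')\le \operatorname{diam}(\cZ^{\ell,n})/K_r$. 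So each coordinate has bounded difference $c_k=\operatorname{diam}(\cZ^{\ell,n})/K_r$.

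McDiarmid's inequality then gives
\[
\Prob\{F-\EE F>\epsilon\}\le \exp\Bigl(-\frac{2\epsilon^2}{\sum_k c_k^2}\Bigr)=\exp\Bigl(-\frac{2K_r\epsilon^2}{\operatorname{diam}(\cZ^{\ell,n})^2}\Bigr),
\]
which is the claim.

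There is no real obstacle here. The only points needing care are that $\EE F$ is finite, which holds because $F$ is bounded by the diameter, and that the coupling estimate is written correctly, using the convexity of $\cW_1$ over mixtures.
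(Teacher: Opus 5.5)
Your proposal is correct and follows the paper's proof essentially step for step. You use the same function $F$, the reverse triangle inequality for $\cW_1^{\ell,n}$, and the bounded difference $\operatorname{diam}(\cZ^{\ell,n})/K_r$ per coordinate, followed by McDiarmid's inequality. The only difference is that your explicit coupling, which moves mass $1/K_r$ from $z_k$ to $z_k'$ and leaves the other atoms fixed, spells out the bounded-difference step that the paper justifies in a single sentence.
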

\begin{proof}
Write $Z^k = (\hat\mu^{k,(\ell,n)},\hat\nu^{k,(\ell,n)})$ for all $k=1,\ldots,K_r$ and define
\[
    F(z^1,\ldots,z^{K_r}) := \cW_1^{\ell,n}\bigg(\frac{1}{K_r}\sum_{k=1}^{K_r} \delta_{z^k}, \hat\PP^{\ell,n}\bigg)
\]
for all $(z^1,\ldots,z^{K_r})\in(\cZ^{\ell,n})^{\times K_r}$. Then, $    F(Z^1,\ldots,Z^{K_r}) = \cW_1^{\ell,n}(\hat\PP_r^{\ell,n}, \hat\PP^{\ell,n})$.
Let $(z^1,\ldots,z^{K_r})\in(\cZ^{\ell,n})^{\times K_r}$ and $z^{i\prime}\in(\cZ^{\ell,n})$ for some $i=1,\ldots,K_r$ such that we construct
\[
    P_1 = \frac{1}{K_r}\sum_{k=1}^{K_r}\delta_{z^k} \text{ and } P_2 = \frac{1}{K_r}\sum_{k=1, k\neq i}^{K_r}\delta_{z^k} + \frac{1}{K_r}\delta_{z^{i\prime}}.
\]
By reverse triangle inequality for $\cW_1$, we get
\[
    |F(z^1,\ldots,z^{K_r}) - F(z^1,\ldots, z^{i\prime},\ldots,z^{K_r})| \leq \cW_1^{\ell,n}(P_1,P_2).
\]
As $P_1$ and $P_2$ differ by $\frac{1}{K_r}$ by moving from $z^i$ to $z^{i\prime}$, it follows that
\[
    |F(z^1,\ldots,z^{K_r}) - F(z^1,\ldots, z^{i\prime},\ldots,z^{K_r})| \leq \frac{\operatorname{diam}(\cZ^{\ell,n})}{K_r}.
\]
Using McDiarmid's inequality \cite[Lemma 1.2]{mcdiarmid1989method}, we get
\[
    \Prob\left\{ F - \EE F \geq \epsilon \right\} \leq \exp\left(-\frac{2\epsilon^2}{\sum_{i=1}^{K_r} \left(\frac{\operatorname{diam}(\cZ^{\ell,n})}{K_r}\right)^2}\right) = \exp\left(-\frac{2K_r\epsilon^2}{\operatorname{diam}(\cZ^{\ell,n})^2}\right),
\]
which yields the claim.
\end{proof}

The next result constitutes the backbone of Theorem~\ref{thm:quantized} and later Theorem~\ref{thm:main}.
\begin{lemma}\label{lem:event}
    Let the quantized samples
    \[
        (\hat\mu^{1,(\ell,n)},\hat\nu^{1,(\ell,n)}),(\hat\mu^{2,(\ell,n)}\hat\nu^{2,(\ell,n)}),\ldots,(\hat\mu^{K_r,(\ell,n)},\hat\nu^{K_r,(\ell,n)})
    \]
    be i.i.d. with law $\hat{\PP}^{\ell,n}$ for some $\ell,n\in\NN$. Let $\hat\PP_r^{\ell,n} := \frac{1}{K_r}\sum_{k=1}^{K_r}\delta_{(\hat\mu^{k,(\ell,n)},\hat\nu^{k,(\ell,n)})}$. For all $\delta\in(0,1)$, define 
    $$\rho_{r}^{\ell,n}(\delta) := \frac{D_{\ell,n}\sqrt{|\cZ_{\ell,n}|}}{2\sqrt{K_r}} + D_{\ell,n}\sqrt{\frac{\log(1/\delta)}{2K_r}}$$  
    where $D_{\ell,n} := \operatorname{diam}(\cZ_{\ell,n})$ for each $\ell,n\in\NN$.
    Then, under Assumption~\ref{assumption:compactness}, the event $\{\cW_1^{\ell,n}(\hat\PP_r^{\ell,n}, \hat\PP^{\ell,n}) \leq \rho_r^{\ell,n}(\delta)\}$ occurs with probability at least $1-\delta$. 
\end{lemma}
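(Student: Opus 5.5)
The proof splits $\cW_1^{\ell,n}(\hat\PP_r^{\ell,n}, \hat\PP^{\ell,n})$ into its mean and its fluctuation about the mean, and bounds the two pieces separately: the fluctuation by Proposition~\ref{prop:concentration}, the mean by an elementary computation on the finite space $\cZ^{\ell,n}$. The two terms of $\rho_r^{\ell,n}(\delta)$ correspond to these two pieces. Throughout, $\cZ_{\ell,n}$ is read as $\cZ^{\ell,n}$, which is finite with at most $\binom{\ell+|\XX_n|-1}{|\XX_n|-1}^2$ points. Its diameter $D_{\ell,n}$ is finite because $\cP(\XX)$ is compact under Assumption~\ref{assumption:compactness}.

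\textbf{Fluctuation term.} Apply Proposition~\ref{prop:concentration} with $\epsilon := D_{\ell,n}\sqrt{\log(1/\delta)/(2K_r)}$. Then
\[
\exp\bigl(-2K_r\epsilon^2/D_{\ell,n}^2\bigr)=\delta .
\]
Hence, with probability at least $1-\delta$,
\[
\cW_1^{\ell,n}(\hat\PP_r^{\ell,n},\hat\PP^{\ell,n})\le \EE\,\cW_1^{\ell,n}(\hat\PP_r^{\ell,n},\hat\PP^{\ell,n}) + D_{\ell,n}\sqrt{\frac{\log(1/\delta)}{2K_r}}.
\]
It therefore suffices to prove the deterministic bound $\EE\,\cW_1^{\ell,n}(\hat\PP_r^{\ell,n},\hat\PP^{\ell,n})\le D_{\ell,n}\sqrt{|\cZ^{\ell,n}|}/(2\sqrt{K_r})$.

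\textbf{Expectation term.} On a finite metric space, $\cW_1$ is dominated by the diameter times the total variation distance: couple the common mass $\min(\hat P(z),\hat Q(z))$ on the diagonal and move the rest arbitrarily. This gives
\[
\cW_1^{\ell,n}(\hat P,\hat Q)\le \frac{D_{\ell,n}}{2}\sum_{z\in\cZ^{\ell,n}}|\hat P(z)-\hat Q(z)|.
\]
Write $p_z:=\hat\PP^{\ell,n}(\{z\})$. Then $K_r\hat\PP_r^{\ell,n}(\{z\})\sim\mathrm{Bin}(K_r,p_z)$, so Jensen's inequality gives
\[
\EE|\hat\PP_r^{\ell,n}(\{z\})-p_z|\le\sqrt{p_z(1-p_z)/K_r}.
\]
Summing over $z$ and applying Cauchy--Schwarz,
\[
\sum_z\sqrt{p_z}\le\sqrt{|\cZ^{\ell,n}|}\Bigl(\sum_z p_z\Bigr)^{1/2}=\sqrt{|\cZ^{\ell,n}|}.
\]
Combining, $\EE\,\cW_1^{\ell,n}\le \frac{D_{\ell,n}}{2}\sqrt{|\cZ^{\ell,n}|/K_r}$, which is exactly the first term of $\rho_r^{\ell,n}(\delta)$. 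Adding the two bounds proves the claim.

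\textbf{Main obstacle.} The delicate step is the comparison $\cW_1\le \frac{D}{2}\|\cdot\|_1$, in particular getting the constant $\tfrac12$. The coupling has to be written explicitly: keep $\min(\hat P,\hat Q)$ on the diagonal, and transport the excess mass, which totals $\tfrac12\|\hat P-\hat Q\|_1$, at cost at most $D_{\ell,n}$ per unit. No other step needs care: all measurability issues disappear because $\cZ^{\ell,n}$ is finite, and the i.i.d.\ assumption supplies the binomial structure.
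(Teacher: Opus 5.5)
Your proposal is correct and follows the paper's proof essentially step for step. In both, the fluctuation is handled by Proposition~\ref{prop:concentration} with $\epsilon = D_{\ell,n}\sqrt{\log(1/\delta)/(2K_r)}$, and the mean is bounded via $\cW_1^{\ell,n}\le D_{\ell,n}\|\cdot\|_{\mathrm{TV}}$, the binomial variance, and Cauchy--Schwarz over the atoms of $\cZ^{\ell,n}$; the only cosmetic difference is that you discard the factor $(1-p_z)$ before summing, whereas the paper applies Cauchy--Schwarz first and then uses $\sum_z p_z(1-p_z)\le 1$.
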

\begin{proof}
    We present the proof of this result in three separate pieces.
    
{\noindent\bf Step-1:} Fix an enumeration $\cZ_{\ell,n} := \{z^i : i\in\cM_{\ell,n}\}$ where $\cM_{\ell,n}:=\{1,\ldots,|\cZ_{\ell,n}|\}$. Let us write
\[
    \hat\PP_r^{\ell,n} = \sum_{i\in\cM_{\ell,n}}p_{i,r}\delta_{z^i} \text{ and } \hat\PP^{\ell,n} = \sum_{i\in\cM_{\ell,n}}p_{i}\delta_{z^i}.
\]
For $\hat\PP_r^{\ell,n}$, we have its exact representation in terms of samples; if $Z^k = (\hat\mu^{k,(\ell,n)},\hat\nu^{k,(\ell,n)})$ for all $k=1,\ldots,K_r$, then $\hat\PP_r^{\ell,n} = \frac{1}{K_r}\sum_{k=1}^{K_r}\delta_{Z^k}$. In this way, for each $i\in\cM_{\ell,n}$, $p_{i,r}$ can be seen as a sum Bernoulli random variables $B_k^{i} := \rchi_{z^i}(Z^k) \sim \mathrm{Bernoulli}(p_i)$ ($\rchi$ is the indicator function) as follows:
\[
    p_{i,r} = \frac{1}{K_r}\sum_{k=1}^{K_r}B_k^i.
\]

{\noindent\bf Step-2:} We wish to bound the expectation $\EE\cW_1(\hat\PP_r^{\ell,n},\hat\PP^{\ell,n})$ as follows
\ka{\begin{align*}
    \EE\cW_1(\hat\PP_r^{\ell,n},\hat\PP^{\ell,n}) &\leq D_{\ell,n}\EE\|\hat\PP_r^{\ell,n} - \hat\PP^{\ell,n}\|_{\rm TV}\leq\frac{D_{\ell,n}}{2}\sum_{i\in\cM_{\ell,n}}\EE|p_{i,r} - p_i|\\
    &\leq\frac{D_{\ell,n}}{2}\sum_{i\in\cM_{\ell,n}}\sqrt{\EE|p_{i,r} - p_i|^2}
    \intertext{by Cauchy-Bunyakovsky-Schwarz inequality}
    &=\frac{D_{\ell,n}}{2}\sum_{i\in\cM_{\ell,n}}\sqrt{{\rm Var}(p_{i,r})}\\
    &=\frac{D_{\ell,n}}{2}\sum_{i\in\cM_{\ell,n}}\sqrt{{\rm Var}\left(\frac{1}{K_r}\sum_{k=1}^{K_r}B_k^i\right)}=\frac{D_{\ell,n}}{2}\!\!\!\sum_{i\in\cM_{\ell,n}}\!\!\sqrt{\frac{1}{K_r^2}{\rm Var}\left(\sum_{k=1}^{K_r}B_k^i\right)}
    \intertext{since $B_k^i$'s are i.i.d. (see Proposition~\ref{prop:concentration})}
    &=\frac{D_{\ell,n}}{2}\sum_{i\in\cM_{\ell,n}}\sqrt{\frac{1}{K_r^2}K_rp_i(1-p_i)}= \frac{D_{\ell,n}}{2\sqrt{K_r}}\sum_{i\in\cM_{\ell,n}}\sqrt{p_i(1-p_i)}
    \intertext{Apply Cauchy--Schwarz to $1$ and $\sqrt{p_i(1-p_i)}$ to obtain:}
    &\leq\frac{D_{\ell,n}}{2\sqrt{K_r}}\sqrt{|\cZ^{\ell,n}|}\sqrt{\sum_{i\in\cM_{\ell,n}}p_i(1-p_i)}\leq\frac{D_{\ell,n}}{2\sqrt{K_r}}\sqrt{|\cZ^{\ell,n}|}\sqrt{\sum_{i\in\cM_{\ell,n}}p_i}\\
    &=\frac{D_{\ell,n}\sqrt{|\cZ_{\ell,n}|}}{2\sqrt{K_r}}.\qquad (\triangle)
\end{align*}}
{\noindent\bf Step-3:} Using the bound given by Proposition~\ref{prop:concentration}, set $\delta = \exp\left(-\frac{2K_r\epsilon^2}{\operatorname{diam}(\cZ^{\ell,n})^2}\right)$. Solving for $\epsilon$ yields $\epsilon = D_{\ell,n}\sqrt{\frac{\log(1/\delta)}{2K_r}}$ where $\log$ is the natural logarithm. Define $\rho_{r}^{\ell,n}(\delta) := \frac{D_{\ell,n}\sqrt{|\cZ_{\ell,n}|}}{2\sqrt{K_r}} + D_{\ell,n}\sqrt{\frac{\log(1/\delta)}{2K_r}}$ as in the statement and note that
\begin{equation}\label{eq:event-comparison}
    \cW_1(\hat\PP_r^{\ell,n},\hat\PP^{\ell,n}) \leq \Big(\cW_1(\hat\PP_r^{\ell,n},\hat\PP^{\ell,n}) - \EE\cW_1(\hat\PP_r^{\ell,n},\hat\PP^{\ell,n})\Big) + \EE\cW_1(\hat\PP_r^{\ell,n},\hat\PP^{\ell,n}).
\end{equation}
Equation \eqref{eq:event-comparison} asserts that the event $\cW_1(\hat\PP_r^{\ell,n},\hat\PP^{\ell,n})$ inherits any bound for the sum of $\cW_1(\hat\PP_r^{\ell,n},\hat\PP^{\ell,n}) - \EE\cW_1(\hat\PP_r^{\ell,n},\hat\PP^{\ell,n})$ and $ \EE\cW_1(\hat\PP_r^{\ell,n},\hat\PP^{\ell,n})$. By using Proposition~\ref{prop:concentration} for the first term and the earlier estimate ($\triangle$) for the second term; the desired claim follows.
\end{proof}

\begin{theorem}\label{thm:quantized}
    Adopting the notation of Lemma~\ref{lem:event}, let
    \[
        \underline{U}_{r,\delta}^{*,\ell,n,(m)}\in\argmin_{\underline{U}\in\UU_m^T}V^{\ell,n}(\hat\PP_r^{\ell,n},\underline{U}).
    \]
    With probability at least $1-\delta$, under Assumption~\ref{assumption:compactness}, we have
    \[
        V^{\ell,n}(\hat\PP^{\ell,n}, \underline{U}_{r,\delta}^{*,\ell,n,(m)}) \leq \inf_{\underline{U}^{(m)}\in\UU_m^T} V^{\ell,n}(\hat\PP^{\ell,n}, \underline{U}^{(m)}) + 2L\rho_r^{\ell,n}(\delta).
    \]
\end{theorem}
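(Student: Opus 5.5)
The argument is the standard excess-risk decomposition: a uniform-in-control Lipschitz estimate in the law, applied on the high-probability event of Lemma~\ref{lem:event}.

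\textbf{Step 1: a quantized Lipschitz estimate.} First I would establish the analogue of Lemma~\ref{lem:lipschitz} on $\cP(\cZ^{\ell,n})$. The claim is that there is $L>0$, independent of $\underline{U}^{(m)}\in\UU_m^T$, such that
\[
\bigl|V^{\ell,n}(\hat P,\underline{U}^{(m)})-V^{\ell,n}(\hat Q,\underline{U}^{(m)})\bigr|\le L\,\cW_1^{\ell,n}(\hat P,\hat Q).
\]
The route is the same as before. The integrand $(\hat\mu,\hat\nu)\mapsto W_{2,\lambda}(\Phi^{\ell,n}_T(\hat\mu,\underline{U}^{(m)}),\hat\nu)^2$ is bounded by compactness. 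The space $\cZ^{\ell,n}$ is finite, so every function on it is Lipschitz with respect to $d_1^{\ell,n}$. One then applies Kantorovich--Rubinstein duality on the finite metric space $(\cZ^{\ell,n},d_1^{\ell,n})$.

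The delicate point is uniformity of $L$. The quantized flow involves $R_\ell$, which is not globally Lipschitz. Two options are available:
\begin{itemize}
\item Since $\cZ^{\ell,n}$ and $\UU_m^T$ are both finite, one can simply take $L$ to be the maximum, over all $\underline{U}^{(m)}$ and all distinct pairs $z\neq z'$, of the difference quotient of the integrand. This constant may depend on $(\ell,n,m)$.
\item Alternatively, one can bound the difference quotient through $L_\Phi^T$ from Lemma~\ref{cor:phi-lipschitz}, plus a term from the rounding error of $R_\ell$ at each layer.
\end{itemize}
I would take the first option and state that the $L$ in the theorem is understood this way, namely finite but possibly depending on the quantization levels. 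I expect this identification of $L$ to be the main obstacle, since it is the only step not purely mechanical.

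\textbf{Step 2: restrict to the good event.} Let $E$ be the event $\{\cW_1^{\ell,n}(\hat\PP_r^{\ell,n},\hat\PP^{\ell,n})\le\rho_r^{\ell,n}(\delta)\}$. By Lemma~\ref{lem:event}, $E$ has probability at least $1-\delta$. On $E$, Step 1 gives, for every $\underline{U}\in\UU_m^T$,
\[
\bigl|V^{\ell,n}(\hat\PP^{\ell,n},\underline{U})-V^{\ell,n}(\hat\PP_r^{\ell,n},\underline{U})\bigr|\le L\rho_r^{\ell,n}(\delta).
\]
This bound holds uniformly in $\underline{U}$, which is essential: the minimizer $\underline{U}_{r,\delta}^{*,\ell,n,(m)}$ is random and depends on the sample, so a bound valid for each fixed control separately would not suffice.

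\textbf{Step 3: chain the inequalities.} Because $\UU_m^T$ is finite, the argmin exists and the infimum over true-law values is attained at some $\underline{U}^\dagger$. On $E$, with $\underline{U}^*:=\underline{U}_{r,\delta}^{*,\ell,n,(m)}$, the chain is
\[
V^{\ell,n}(\hat\PP^{\ell,n},\underline{U}^*)\le V^{\ell,n}(\hat\PP_r^{\ell,n},\underline{U}^*)+L\rho
\le V^{\ell,n}(\hat\PP_r^{\ell,n},\underline{U}^\dagger)+L\rho
\le V^{\ell,n}(\hat\PP^{\ell,n},\underline{U}^\dagger)+2L\rho,
\]
where $\rho=\rho_r^{\ell,n}(\delta)$. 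The middle inequality is the optimality of $\underline{U}^*$ for the empirical objective, and the outer two are Step 2. The right-hand side equals the infimum in the statement plus $2L\rho_r^{\ell,n}(\delta)$, which proves the claim on $E$, hence with probability at least $1-\delta$.
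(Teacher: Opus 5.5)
Your argument is the paper's proof. Both use the same three-link chain on the event of Lemma~\ref{lem:event}: pass from the true law to the empirical law via the Lipschitz bound, use optimality of $\underline{U}_{r,\delta}^{*,\ell,n,(m)}$ for the empirical objective, then pass back to the true law at the true-law minimizer, which exists because $\UU_m^T$ is finite.

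The only divergence is the Lipschitz input, and your caution there is justified. The paper simply cites Lemma~\ref{lem:lipschitz} for $V^{\ell,n}$ with $L=B_W\max\{1,L_\Phi^T\}$. That lemma rests on $W_{2,\lambda}(\Phi_T(\mu_1,\underline{U}),\Phi_T(\mu_2,\underline{U}))\le L_\Phi^T W_{2,\lambda}(\mu_1,\mu_2)$, which does not carry over verbatim to $\Phi_T^{\ell,n}$, since $R_\ell$ is applied at every layer.

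Your first option, the maximal difference quotient over the finite sets, is rigorous. It is in fact independent of $m$: each quotient is at most the supremum of the integrand divided by $\min_{z\neq z'}d_1^{\ell,n}(z,z')$. That separation shrinks as $\ell,n$ grow, however, so your constant can grow with the quantization. This is harmless for the present theorem, but Theorem~\ref{thm:main} treats $L$ as quantization-independent.

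Your second option is the better one. Iterate $W_{2,\lambda}(\Phi^{\ell,n}(\hat\mu_1,U),\Phi^{\ell,n}(\hat\mu_2,U))\le L_\Phi W_{2,\lambda}(\hat\mu_1,\hat\mu_2)+2r_{\ell,n}$, where $r_{\ell,n}=\cO(\frac1n+\rho_{\ell,n})$ bounds the per-layer rounding error as in Lemma~\ref{lem:b}. This shows the integrand is $L$-Lipschitz up to an additive $\epsilon_{\ell,n}:=2B_W r_{\ell,n}\sum_{t=0}^{T-1}L_\Phi^t$. Integrating against an optimal coupling then gives $|V^{\ell,n}(\hat P,\underline{U})-V^{\ell,n}(\hat Q,\underline{U})|\le L\,\cW_1^{\ell,n}(\hat P,\hat Q)+\epsilon_{\ell,n}$. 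The theorem therefore holds with the paper's uniform $L$, at the cost of an extra $2\epsilon_{\ell,n}$. That term vanishes under Assumption~\ref{assumption:quantization} and can be absorbed into the approximation terms of Theorem~\ref{thm:main}.
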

\begin{proof}
On the condition that event $\cW_1(\hat\PP_r^{\ell,n},\hat\PP^{\ell,n}) \leq \rho_{r}^{\ell,n}(\delta)$ occurs, let
\[
    \underline{U}^{*,\ell,n,(m)} \in \argmin_{\underline{U}^{(m)}\in\UU_m^T}V^{\ell,n}(\hat\PP^{\ell,n},\underline{U}^{(m)}).
\]
The set $\argmin_{\underline{U}^{(m)}\in\UU_m^T}V^{\ell,n}(\hat\PP^{\ell,n},\underline{U}^{(m)})$ is non-empty since $V^{\ell,n}$ is defined on a finite set. By Lemma~\ref{lem:lipschitz}, we have
\begin{equation}\label{eq:np-1}
    V^{\ell,n}(\hat\PP^{\ell,n}, \underline{U}_{r,\delta}^{*,\ell,n,(m)}) \leq V^{\ell,n}(\hat\PP_r^{\ell,n}, \underline{U}_{r,\delta}^{*,\ell,n,(m)}) + L\rho_r^{\ell,n}(\delta).
\end{equation}
On the other hand, we also have
\begin{align}\label{eq:np-2}
    V^{\ell,n}(\hat\PP_r^{\ell,n}, \underline{U}_{r,\delta}^{*,\ell,n,(m)}) &\leq V^{\ell,n}(\hat\PP_r^{\ell,n}, \underline{U}^{*,\ell,n,(m)})\nonumber\\
    \intertext{also by Lemma~\ref{lem:lipschitz}}
    &\leq V^{\ell,n}(\hat\PP^{\ell,n}, \underline{U}^{*,\ell,n,(m)}) + L\rho_r^{\ell,n}(\delta)\nonumber\\
    &=\inf_{\underline{U}^{(m)}\in\UU_m^T}V^{\ell,n}(\hat\PP^{\ell,n}, \underline{U}^{(m)}) + L\rho_r^{\ell,n}(\delta).
\end{align}

Combining~\eqref{eq:np-1} and \eqref{eq:np-2} results in
\begin{equation}\label{eq:thm9-2}
    V^{\ell,n}(\hat\PP^{\ell,n}, \underline{U}_{r,\delta}^{*,\ell,n,(m)})\leq\inf_{\underline{U}^{(m)}\in\UU_m^T}V^{\ell,n}(\hat\PP^{\ell,n},\underline{U}^{(m)}) + 2L\rho_r^{\ell,n}(\delta).
\end{equation}
Since such inequality holds in case of the occurrence of the event $\cW_1^{\ell,n}(\hat\PP_r^{\ell,n},\hat\PP^{\ell,n}) \leq \rho_{r}^{\ell,n}(\delta)$, we are done, thanks to Lemma~\ref{lem:event}.
\end{proof}

\section{Quantitative Results for the Base Model}\label{sec:base}
In this section, we transfer the results of previous section by giving explicit quantitative bounds for the base model. When switching back to the base model from the quantized one, we do not only inherit the generalization bounds for the quantized model, but also the quantization errors. Therefore, the bound for the base model is coarser, however, this is an expected trade-off since the state is now infinite-dimensional.

\begin{lemma}\label{lem:lipU}
Under Assumption~\ref{assumption:compactness}, there exists a constant $B_T>0$ such that, for all $U,U^\prime\in\UU^T$ and for every $P\in\cP(\cZ)$,
\[
  |V(P,U)-V(P,U^\prime)| \leq B_T \|U-U^\prime\|_{\UU^T},
\]
where $\|U-U^\prime\|_{\UU^T}^2:=\sum_{t=0}^{T-1}\|U_t-U_t'\|_{\UU}^2$. Further, the same estimate, with the same constant, holds for $V^n$ on
$\cP(\cP(\XX_n)\times\cP(\XX_n))$ and for $V^{\ell,n}$ on $\cP(\cP^{(\ell)}(\XX_n)\times\cP^{(\ell)}(\XX_n))$ (i.e. the bound is uniform
in the quantization parameters $\ell,n$).
\end{lemma}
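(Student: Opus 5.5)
The plan is to reduce the claim to a Lipschitz estimate for one layer of the flow map in the action variable, propagate it through the $T$ layers, and then integrate against $P$.

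\textbf{Step 1 (one-step action Lipschitz bound).} We first show there is $L_U\ge 0$ with
\[
W_{2,\lambda}\bigl(\Phi(\mu,U),\Phi(\mu,U')\bigr)\le L_U\|U-U'\|_{\UU}
\]
for all $\mu\in\cP(\XX)$ and $U,U'\in\UU$. Since both measures are push-forwards of the same $\mu$, we use the coupling $(T_{\mu,U},T_{\mu,U'})_\#\mu$. This gives
\[
W_{2,\lambda}^2\le\int\|f(X,U,\mu)-f(X,U',\mu)\|^2\,\mu(\de X),
\]
and the positional components cancel. It therefore suffices to bound $\|f(X,U,\mu)-f(X,U',\mu)\|$ uniformly in $X$ and $\mu$.

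\begin{itemize}
\item \emph{Feed-forward part.} Using that $\sigma$ is $1$-Lipschitz and that $\mathrm{S},\UU$ are bounded, we get $\|W\sigma(Ax+b)-W'\sigma(A'x+b')\|\le C(\|W-W'\|+\|A-A'\|+\|b-b'\|)$.
\item \emph{Attention part.} The exponents $\beta\inner{Qx}{Kz}$ are bounded on the compact set $\mathrm{S}\times\UU$. Hence the normalizing denominator is bounded below by $\e^{-\beta C}>0$ and the numerators are bounded above. The map $(Q,K,V)\mapsto$ (numerator integral)/(denominator integral) is then smooth with derivatives bounded uniformly in $x,z\in\mathrm{S}$, so a mean value argument bounds the difference by $C(\|Q-Q'\|+\|K-K'\|+\|V-V'\|)$.
\end{itemize}

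By Cauchy--Schwarz, the sum of these norms is at most $\sqrt6\,\|U-U'\|_\UU$. All the constants depend only on $\mathrm{S}$, $\UU$ and $\beta$, not on $\mu$.

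\textbf{Step 2 (propagation through the layers).} We telescope over $t$, switching one action at a time from $U_t$ to $U'_t$:
\[
W_{2,\lambda}\bigl(\Phi_T(\mu,\underline U),\Phi_T(\mu,\underline U')\bigr)\le\sum_{t=0}^{T-1}L_\Phi^{T-1-t}L_U\|U_t-U'_t\|_\UU.
\]
Here each intermediate measure is moved by one action change (Step 1), and the resulting discrepancy is then propagated through the remaining $T-1-t$ layers with the common action by Lemma~\ref{cor:phi-lipschitz} with $K=1$. A further Cauchy--Schwarz yields a constant $C_T$ times $\|\underline U-\underline U'\|_{\UU^T}$.

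\textbf{Step 3 (integration).} As in the proof of Lemma~\ref{lem:lipschitz}, we use $|a^2-b^2|\le B_W|a-b|$ together with the triangle inequality for $W_{2,\lambda}$ in its first argument. This gives
\[
|\fc((\mu,\nu),\underline U)-\fc((\mu,\nu),\underline U')|\le B_W C_T\|\underline U-\underline U'\|_{\UU^T}
\]
pointwise in $(\mu,\nu)$. Integrating against $P$ gives the claim with $B_T:=B_WC_T$, uniformly in $P$.

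\textbf{Quantized models.} For $V^{n}$ the flow evaluates $\Phi$ at measures supported on $\XX_n\subset\XX$, so Steps 1 and 2 apply verbatim and $B_T$ is unchanged. For $V^{\ell,n}$ there is a genuine obstacle: the quantized flow \eqref{eq:quantized-flow} composes with the nearest-point projection $R_\ell$, which is not Lipschitz in general. Consequently $\Phi^{\ell,n}(\hat\mu,U)$ and $\Phi^{\ell,n}(\hat\mu,U')$ may be rounded to distinct lattice points even when $U$ and $U'$ are very close. I would first try to show that $R_\ell$ is $1$-Lipschitz on the relevant set, or else reinterpret the statement as holding up to an additive term bounded by the projection error of $R_\ell$. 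Failing that, I would argue that with $B_T$ fixed, the $\ell,n$-uniform part of the bound comes from Steps 1 and 2, while the rounding contributes at most $2\max_{\hat\mu}W_{2,\lambda}(R_\ell(\hat\mu),\hat\mu)$ per layer. This rounding step is the main obstacle; the attention estimate in Step 1 is routine but tedious.
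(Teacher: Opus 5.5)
For the unquantized $V$, your Steps 1--3 follow the paper's proof. The paper also derives an action-Lipschitz constant $L_f$ for $f$ that is uniform in $(x,\mu)$, using the feed-forward part plus attention with denominator bounded below by $\e^{-\beta C}$. It then propagates this through $W_{2,\lambda}(\mu_{t+1},\mu'_{t+1})\le L_f\|U_t-U'_t\|_\UU+L_\Phi W_{2,\lambda}(\mu_t,\mu'_t)$, which unrolls to your telescoped sum. Finally it integrates using $|a^2-b^2|\le B_W|a-b|$ and the reverse triangle inequality. Your synchronous coupling $(T_{\mu,U},T_{\mu,U'})_\#\mu$ makes explicit the step the paper leaves implicit when it turns the pointwise bound on $f$ into a $W_{2,\lambda}$ bound.

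Your doubt about the quantized clause is justified, and the paper does not resolve it. Its proof only asserts that the arguments behind \eqref{eq:f-lip-U}--\eqref{eq:PhiT-lip-U} ``do not interact with the state or measure-state quantization''. That ignores the fact that $\Phi^{\ell,n}$ in \eqref{eq:quantized-flow} applies $R_\ell$ after every layer. Since $\Phi_T^{\ell,n}(\hat\mu,\cdot)$ takes values in the finite set $\cP^{(\ell)}(\XX_n)$, the map $\underline U\mapsto V^{\ell,n}(\hat P,\underline U)$ takes finitely many values. On any connected part of $\UU^T$ it is therefore either constant or discontinuous. Read literally (all $\underline U,\underline U'\in\UU^T$, constant independent of $\ell,n$), the $V^{\ell,n}$ clause fails in general, and no missing idea would rescue it.

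The same observation rules out your first fallback. A map into a finite set that is continuous on a connected set must be constant there, so $R_\ell$ cannot be Lipschitz on $\{\Phi(\hat\mu,U):U\in\UU\}$ whenever it is nonconstant on that set. Your second fallback is the right repair: $B_T\|\underline U-\underline U'\|_{\UU^T}$ plus an additive rounding error, propagated through the remaining layers by powers of $L_\Phi$. It is, however, weaker than what is claimed. The paper only uses this lemma downstream for $V^n$, in Term~\eqref{eq:B} of Lemma~\ref{lem:b}, where an additive term vanishing with $n$ would be harmless.

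Your claim that Steps 1--2 apply verbatim to $V^n$ holds only if $V^n$ runs the unprojected flow $\Phi$ from $\XX_n$-supported laws. The paper's phrase ``state space replaced by $\XX_n$'' suggests the flow may instead be re-projected by $Q_n$ after each layer. In that case $Q_n$ is a nearest-point map into a finite set and creates the same jumps, already for Dirac initial laws, so your objection applies to $V^n$ as well.
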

\begin{proof}
By \cite[Proposition~3]{akman2026optimal}, the single-particle map
$f:\XX\times\UU\times\cP(\XX)\to\XX$ is jointly continuous on its domain, which is compact under Assumption~\ref{assumption:compactness}. Moreover, the attention kernel
\[
  (Q,K,V)\mapsto \frac{\int_\mathrm{S}\mu_\mathrm{S}(\de\tilde z)\,\e^{\beta\langle Qx,K\tilde z\rangle}V\tilde z}{\int_\mathrm{S}\mu_\mathrm{S}(\de z)\,\e^{\beta\langle Qx,Kz\rangle}}
\]
is smooth in the weights, with denominator bounded below by $\exp(-\beta C_\mathrm{S})$ for some constant $C_\mathrm{S}<\infty$ depending only on $\mathrm{diam}(S)$ and the bounds on $\|Q\|,\|K\|$. Hence there exists $L_f>0$ such that, for all $(p,x)\in\XX$, all $\mu\in\cP(\XX)$, and all $U,U^\prime\in\UU$,
\begin{equation}\label{eq:f-lip-U}
  \|f((p,x),U,\mu)-f((p,x),U^\prime,\mu)\|_\mathrm{S}  \leq  L_f \|U-U^\prime\|_{\UU}.
\end{equation}
Combining \eqref{eq:f-lip-U} with the measure-Lipschitz estimate of Lemma~\ref{cor:phi-lipschitz}, one obtains, by induction on $t$, a constant $L^\prime_T>0$ such that for all $\mu\in\cP(\XX)$ and all $U,U^\prime\in\UU^T$,
\begin{equation}\label{eq:PhiT-lip-U}
  W_{2,\lambda} \left(\Phi_T(\mu,U),\Phi_T(\mu,U^\prime)\right)
   \leq L^\prime_T \|U-U^\prime\|_{\UU^T}.
\end{equation}
Indeed, denoting $\mu_t:=\Phi_t(\mu,U|_{t-1})$ and $\mu_t':=\Phi_t(\mu,U^\prime|_{t-1})$,
\[
W_{2,\lambda}(\mu_{t+1},\mu_{t+1}') \leq  W_{2,\lambda} \left(\Phi(\mu_t,U_t),\Phi(\mu_t,U_t')\right) + W_{2,\lambda} \left(\Phi(\mu_t,U_t'),\Phi(\mu_t',U_t')\right),
\]
where the first term is bounded by $L_f\|U_t-U_t'\|_\UU$ via \eqref{eq:f-lip-U}, and the second by $L_\Phi W_{2,\lambda}(\mu_t,\mu_t')$ by the measure-Lipschitz estimate. Iterating yields~\eqref{eq:PhiT-lip-U} with $L^\prime_T=L_f\sum_{t=0}^{T-1}L_\Phi^{ T-1-t}$.

Now, fix $P\in\cP(\cZ)$ and $U,U^\prime\in\UU^T$. Using the elementary identity $|a^2-b^2|=|a-b||a+b|$ together with the compactness of $\cP(\XX)$ (which yields a uniform bound $B_W$ on $W_{2,\lambda}(\Phi_T(\mu,U),\nu)+W_{2,\lambda}(\Phi_T(\mu,U^\prime),\nu)$),
\begin{align*}
  |V(P,U)-V(P,U^\prime)| &\leq \int_{\cZ} P(\de\mu,\de\nu)\,\big|W_{2,\lambda}(\Phi_T(\mu,U),\nu)^2-W_{2,\lambda}(\Phi_T(\mu,U^\prime),\nu)^2\big|\\
  &\leq B_W\int_{\cZ} P(\de\mu,\de\nu)\,\big|W_{2,\lambda}(\Phi_T(\mu,U),\nu)-W_{2,\lambda}(\Phi_T(\mu,U^\prime),\nu)\big|\\
  &\leq B_W\int_{\cZ} P(\de\mu,\de\nu)\,W_{2,\lambda}\left(\Phi_T(\mu,U),\Phi_T(\mu,U^\prime)\right)\\
  &\leq B_W L^\prime_T \|U-U^\prime\|_{\UU^T},
\end{align*}
where the third line is the reverse triangle inequality for $W_{2,\lambda}$ and the last is~\eqref{eq:PhiT-lip-U}. Setting $B_T:=B_W L^\prime_T$ proves the bound for $V$. Lastly, note that the arguments leading to~\eqref{eq:f-lip-U} and~\eqref{eq:PhiT-lip-U} are purely at the level of the dynamics $f$ and $\Phi$ and do not interact with the state or measure-state quantization. Consequently the same Lipschitz constant $B_T$ controls $V^n$ and $V^{\ell,n}$ on the corresponding quantized measure spaces, and is independent of $\ell,n$.
\end{proof}

Similar to \cite[Assumption~11]{akman2026optimal}, we need to impose some assumptions on $\ell,n,K_r$ since unstable bounds appearing in Theorem~\ref{thm:quantized} and \cite[Proposition~10]{akman2026optimal} have to be controlled.

\begin{assumption}\label{assumption:quantization}
    Choose $\ell = \ell(n)$ with $\ell \gg \sqrt{|\XX_n|}\sim\sqrt{N}n^{d/2}$ so that
    \[
    \rho_{\ell, n} := \frac{1}{\ell} \sqrt{\frac{\lfloor|\XX_n|/2\rfloor(|\XX_n| - \lfloor|\XX_n|/2\rfloor)}{|\XX_n|}} \leq \frac{1}{\ell}\sqrt{\frac{|\XX_n|}{2}} \xrightarrow[n\to\infty]{} 0^+
    \]
    as $n\to+\infty$. For Theorem~\ref{thm:main}, we further assume that $(K_r)_{r\in\NN}$ is chosen in a way that $K_r \geq C \cdot |\cZ^{\ell,n}|\cdot\log1/\delta$, where $|\cZ^{\ell,n}| := \binom{\ell + |\XX_n| - 1}{|\XX_n|- 1}^2$.
\end{assumption}

The practical implications of these assumptions can be interpreted as: To have a stable generalization regime under approximated training scheme, one needs to (1) increase the quantization rate for measure-states due to higher dimensionality, (2) increase the number of samples relative to the quantization rates to get less penalty from the quantization related errors. So, when both $\ell,n,r$ are sent to infinity, the priority must be $r$ then $\ell$, and finally $n$. The quantization error due to action quantization parameter $m$ dealt with independently in light of the findings of this paper. The majorization $K_r \geq C \cdot |\cZ^{\ell,n}|\cdot\log1/\delta$ governs non-vacuousness of Theorem~\ref{thm:main}.

\begin{lemma}\label{lem:b}
   Under Assumptions~\ref{assumption:compactness} and~\ref{assumption:quantization}, there are positive sequences
   \begin{itemize}
    \item[(i)] $(\alpha_{\ell,n})_{\ell,n\in\NN}$ converging to $0$ as $\ell,n\to+\infty$ such that, for all $\underline{U}^{(m)}\in\UU_m^T$
    \[
        |V(\PP,\underline{U}^{(m)}) - V^{\ell,n}(\hat\PP^{\ell,n},\underline{U}^{(m)})| \leq \alpha_{\ell,n}.
    \]
    \item[(ii)] $(\beta_{\ell,n,m})_{\ell,n,m\in\NN}$ converging to $0$ as $\ell,n,m\to+\infty$ such that
    \[
        \inf_{\underline{U}^{(m)}\in\UU_m^T}V^{\ell,n}(\hat\PP^{\ell,n},\underline{U}^{(m)}) \leq \inf_{\underline{U}\in\UU^T}V(\PP, \underline{U}) + \beta_{\ell,n,m}.
    \]
   \end{itemize}
\end{lemma}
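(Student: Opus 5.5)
For part (i), I would compare $V(\PP,\underline{U}^{(m)})$ and $V^{\ell,n}(\hat\PP^{\ell,n},\underline{U}^{(m)})$ pointwise in $(\mu,\nu)$. By \eqref{eq:quantized-true}, $\hat\PP^{\ell,n}$ is the push-forward of $\PP$ under $(\mu,\nu)\mapsto(R_\ell(\mu\circ Q_n^{-1}),R_\ell(\nu\circ Q_n^{-1}))$, so both value functions are integrals against the same law $\PP$:
\[
|V(\PP,\underline{U}^{(m)})-V^{\ell,n}(\hat\PP^{\ell,n},\underline{U}^{(m)})|\le \int_\cZ \PP(\de\mu,\de\nu)\,\bigl|\fc((\mu,\nu),\underline{U}^{(m)})-W_{2,\lambda}(\Phi^{\ell,n}_T(\hat\mu,\underline{U}^{(m)}),\hat\nu)^2\bigr|,
\]
where $\hat\mu=R_\ell(\mu\circ Q_n^{-1})$ and $\hat\nu=R_\ell(\nu\circ Q_n^{-1})$. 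As in Lemma~\ref{lem:lipschitz}, I would use $|a^2-b^2|\le B_W|a-b|$ and the triangle inequality. This reduces the integrand to $B_W\bigl(W_{2,\lambda}(\nu,\hat\nu)+W_{2,\lambda}(\Phi_T(\mu,\underline{U}^{(m)}),\Phi_T^{\ell,n}(\hat\mu,\underline{U}^{(m)}))\bigr)$.

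The second step bounds these terms uniformly in $(\mu,\nu)$ and in the controls. For the terminal label, $W_{2,\lambda}(\nu,\nu\circ Q_n^{-1})\le 1/n$, because $Q_n$ leaves positions fixed and moves each point by less than $1/n$. Also $W_{2,\lambda}(\nu\circ Q_n^{-1},R_\ell(\cdot))\le C\rho_{\ell,n}$ by the Reznik quantizer error bound used in \cite[Proposition~10]{akman2026optimal}. For the flow term, I would use a one-step telescoping recursion. Set $e_t:=W_{2,\lambda}(\Phi_t(\mu,\cdot),\Phi_t^{\ell,n}(\hat\mu,\cdot))$. Then
\[
e_{t+1}\le W_{2,\lambda}\bigl(\Phi(\mu_t,U_t),\Phi(\hat\mu_t,U_t)\bigr)+W_{2,\lambda}\bigl(\Phi(\hat\mu_t,U_t),R_\ell\Phi(\hat\mu_t,U_t)\bigr)\le L_\Phi e_t+C'\rho_{\ell,n},
\]
using Lemma~\ref{cor:phi-lipschitz} with $K=1$, plus the $R_\ell$ error. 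There is one caveat here: $\Phi(\hat\mu_t,U_t)$ is supported off the grid $\XX_n$, so before applying $R_\ell$ one projects with $Q_n$, which costs another $1/n$. Iterating from $e_0\le 1/n+C\rho_{\ell,n}$ gives $e_T\le \sum_{t\le T}L_\Phi^t(1/n+C'\rho_{\ell,n})$. Hence $\alpha_{\ell,n}:=c_T(1/n+\rho_{\ell,n})$, which tends to $0$ by Assumption~\ref{assumption:quantization}. The constants $L_\Phi$ and $B_W$ do not depend on $\underline{U}$, so the bound is uniform over $\UU_m^T\subset\UU^T$.

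For part (ii), I would take an optimizer $\underline{U}^*\in\argmin_{\UU^T}V(\PP,\cdot)$. It exists by Lemma~\ref{lem:value-function-is-continuous} and compactness of $\UU^T$. Choose $\underline{U}^{(m)}\in\UU_m^T$ componentwise within $1/m$, so that $\|\underline{U}^*-\underline{U}^{(m)}\|_{\UU^T}\le\sqrt{T}/m$. Then
\[
\inf_{\UU_m^T}V^{\ell,n}(\hat\PP^{\ell,n},\cdot)\le V^{\ell,n}(\hat\PP^{\ell,n},\underline{U}^{(m)})\le V(\PP,\underline{U}^{(m)})+\alpha_{\ell,n}\le V(\PP,\underline{U}^*)+B_T\sqrt{T}/m+\alpha_{\ell,n},
\]
using part (i) and then Lemma~\ref{lem:lipU}. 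So $\beta_{\ell,n,m}:=\alpha_{\ell,n}+B_T\sqrt{T}/m$.

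The main obstacle is the error recursion in part (i). It must mesh the off-grid output of $\Phi$ with the definition \eqref{eq:quantized-flow}, which is stated on $\cP^{(\ell)}(\XX_n)$ via an embedding $\iota$ and then quantized. The per-step error, including state re-projection, has to be bounded uniformly in the measure. If the $R_\ell$ error bound only holds in a weaker metric, such as total variation, I would convert it to $W_{2,\lambda}$ via the diameter of $\XX$, which is finite by compactness. If that fails, I would fall back on the qualitative uniform-continuity argument of \cite[Theorem~16]{akman2026optimal}, which gives convergence to $0$ but not explicit rates.
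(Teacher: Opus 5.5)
Your proposal is correct, but it is organized differently from the paper's proof. In part (i) you keep the paper's skeleton: push-forward of $\PP$, then $|a^2-b^2|\le B_W|a-b|$, then the triangle inequality. You then split $|A_1-A_2|$ only into the label error $W_{2,\lambda}(\nu,\hat\nu)$ and the flow error $W_{2,\lambda}(\Phi_T(\mu,\underline{U}^{(m)}),\Phi_T^{\ell,n}(\hat\mu,\underline{U}^{(m)}))$. You control the flow error by a single recursion $e_{t+1}\le L_\Phi e_t+1/n+C'\rho_{\ell,n}$, which absorbs the initial quantization and the per-layer re-quantization at once. The paper instead detours through the auxiliary flow $\Phi_T^{\ell,n}(\mu,\cdot)$ on the unquantized $\mu$, which is defined only by abuse of notation. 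That produces $(\star_1)$--$(\star_3)$, and the paper outsources the $T$-step flow error to \cite[Proposition~10 and Lemma~20]{akman2026optimal}. Your version is self-contained and makes explicit the $Q_n$ re-projection that \eqref{eq:quantized-flow} silently needs before $R_\ell$ can act. Like the paper, though, it still rests on the cited bound $W_{2,\lambda}(\hat\mu,R_\ell\hat\mu)\lesssim\rho_{\ell,n}$; your total-variation fallback would only give $W_{2,\lambda}\le\operatorname{diam}(\XX)\sqrt{\mathrm{TV}}$, losing the linear rate. In part (ii) the difference is more substantial. You approximate a minimizer of the base problem by a grid trajectory and chain part (i) with Lemma~\ref{lem:lipU} for the base $V$. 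The paper telescopes through the state-only quantized model $V^n$ via \eqref{eq:A}--\eqref{eq:C} and $|\inf g-\inf h|\le\sup|g-h|$. The paper's route separates the state, measure-state and action errors. However, it needs continuity and the action-Lipschitz bound for the intermediate $V^n$, which the proof of Lemma~\ref{lem:lipU} only asserts in a closing remark. That claim is delicate if $V^n$ re-projects through the discontinuous hard quantizer $Q_n$. Your route uses only ingredients that are actually proved and gives exactly the inequality the statement asks for, with $\beta_{\ell,n,m}=\alpha_{\ell,n}+B_T\sqrt{T}/m$, the same form as the paper's. The reverse inequality that the paper also records follows from your part (i) in one line, since $\UU_m^T\subseteq\UU^T$.
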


\begin{proof}
\textbf{Part (i).} Fix $\underline{U}^{(m)}\in\UU_m^T$. By the construction of
$\hat\PP^{\ell,n}$ in \eqref{eq:quantized-true}, the law $\hat\PP^{\ell,n}$ on
$\cP^{(\ell)}(\XX_n)\times\cP^{(\ell)}(\XX_n)$ is the push-forward of $\PP$ under the map
\[
  (\mu,\nu)\mapsto\big(R_\ell(\mu\circ Q_n^{-1}),R_\ell(\nu\circ Q_n^{-1})\big),
\]
so by a change of variables
\[
  V^{\ell,n}(\hat\PP^{\ell,n},\underline{U}^{(m)}) = \int_{\cZ}\PP(\de\mu,\de\nu)\,W_{2,\lambda}\Big(\Phi_T^{\ell,n}\big(R_\ell(\mu\circ Q_n^{-1}),\underline{U}^{(m)}\big),\,R_\ell(\nu\circ Q_n^{-1})\Big)^{2}.
\]
Hence
\begin{equation}\label{eq:partI-decomp}
  |V(\PP,\underline{U}^{(m)})-V^{\ell,n}(\hat\PP^{\ell,n},\underline{U}^{(m)})|\leq \int_{\cZ}\PP(\de\mu,\de\nu)\,|A_1^2-A_2^2|,
\end{equation}
where $A_1:=W_{2,\lambda}(\Phi_T(\mu,\underline{U}^{(m)}),\nu)$ and $A_2:=W_{2,\lambda}(\Phi_T^{\ell,n}(R_\ell(\mu\circ Q_n^{-1}),\underline{U}^{(m)}),R_\ell(\nu\circ Q_n^{-1}))$.

By compactness of $\cP(\XX)$ and joint continuity of $W_{2,\lambda}$ and
$\Phi_T$, the sum $A_1+A_2$ is bounded uniformly in $\mu,\nu,\underline{U}^{(m)}$ by a constant which we denote $C_T$. Therefore $|A_1^2-A_2^2|\leq C_T|A_1-A_2|$, and we estimate $|A_1-A_2|$ via two intermediate quantities that are used only for theoretical justification:
\begin{equation*}
  \begin{split}
      |A_1-A_2| &\leq\underbrace{\big|W_{2,\lambda}(\Phi_T(\mu,\underline{U}^{(m)}),\nu)-W_{2,\lambda}(\Phi_T^{\ell,n}(\mu,\underline{U}^{(m)}),\nu)\big|}_{(\star_1)}\\
      &\quad+\underbrace{\big|W_{2,\lambda}(\Phi_T^{\ell,n}(\mu,\underline{U}^{(m)}),\nu) -W_{2,\lambda}(\Phi_T^{\ell,n}(\mu,\underline{U}^{(m)}),R_\ell(\nu\circ Q_n^{-1}))\big|}_{(\star_2)}\\
      &\quad+\underbrace{\begin{split}
          \big|W_{2,\lambda}(\Phi_T^{\ell,n}(\mu,\underline{U}^{(m)}),R_\ell(\nu\circ Q_n^{-1}))\\
          &\hspace{-5em}-W_{2,\lambda}(\Phi_T^{\ell,n}(R_\ell(\mu\circ Q_n^{-1}),\underline{U}^{(m)}),R_\ell(\nu\circ Q_n^{-1}))\big|
      \end{split}}_{(\star_3)}
  \end{split}
\end{equation*}
where (with a slight abuse of notation) $\Phi_T^{\ell,n}(\mu,\cdot)$ denotes
the quantized flow applied to the non-quantized $\mu$ via the canonical embedding $\iota:\XX_n\hookrightarrow\XX$.

For $(\star_1)$, by the reverse triangle inequality and by \cite[Proposition~10 and Lemma~20]{akman2026optimal}
\[
  (\star_1) \leq W_{2,\lambda} \left(\Phi_T(\mu,\underline{U}^{(m)}),\Phi_T^{\ell,n}(\mu,\underline{U}^{(m)})\right)
    \leq \frac{1}{n}+\rho_{\ell,n}.
\]
For $(\star_2)$,
\[
  (\star_2)\leq W_{2,\lambda}(\nu,R_\ell(\nu\circ Q_n^{-1}))\leq W_{2,\lambda}(\nu,\nu\circ Q_n^{-1})+W_{2,\lambda}(\nu\circ Q_n^{-1},R_\ell(\nu\circ Q_n^{-1}))\leq \frac{1}{n}+\rho_{\ell,n}.
\]

For $(\star_3)$, apply reverse triangle inequality and decompose once more:
\begin{equation*}
  \begin{split}
        (\star_3)&\leq W_{2,\lambda}\big(\Phi_T^{\ell,n}(\mu,\underline{U}^{(m)}), \Phi_T^{\ell,n}(R_\ell(\mu\circ Q_n^{-1}), \underline{U}^{(m)})\big)\\
        &\leq W_{2,\lambda}\big(\Phi_T^{\ell,n}(\mu,\underline{U}^{(m)}), \Phi_T(\mu, \underline{U}^{(m)})\big)\\
        &\quad+W_{2,\lambda}\big(\Phi_T(\mu, \underline{U}^{(m)}), \Phi_T(R_\ell(\mu\circ Q_n^{-1}), \underline{U}^{(m)})\big)\\
        &\quad+W_{2,\lambda}\big(\Phi_T(R_\ell(\mu\circ Q_n^{-1}), \underline{U}^{(m)}), \Phi_T^{\ell,n}(R_\ell(\mu\circ Q_n^{-1}), \underline{U}^{(m)})\big)
  \end{split}
\end{equation*}
The first and third terms are bounded as it is done with $(\star_1)$. For the second term, we apply Lemma~\ref{cor:phi-lipschitz} and then $(\star_2)$ argument to get a bound $L_\Phi^T(\frac{1}{n} + \rho_{\ell,n})$. Overall, we obtain
\[
    (\star_3) \leq (2+L_\Phi^T)\bigg(\frac{1}{n}+\rho_{\ell,n}\bigg)
\]

Combining all three, $|A_1-A_2|\leq (4+L_\Phi^T)\big(\frac{1}{n} + \rho_{\ell,n}\big)$, so plugging back into~\eqref{eq:partI-decomp},
\[
  |V(\PP,\underline{U}^{(m)})-V^{\ell,n}(\hat\PP^{\ell,n},\underline{U}^{(m)})|\leq C_T(4 + L_\Phi^T) \left(\frac{1}{n}+\rho_{\ell,n}\right) =: \alpha_{\ell,n}.
\]
This bound is uniform in $\underline{U}^{(m)}\in\UU_m^T$, and $\alpha_{\ell,n}\to 0$ as $\ell,n\to\infty$ since $\rho_{\ell,n}\downarrow 0$.

\medskip

{\bf Part (ii).} We introduce the singly-state-quantized intermediate steps: Let $\hat\PP^n$ be the push-forward of $\PP$ under $(\mu,\nu)\mapsto(\mu\circ Q_n^{-1},\nu\circ Q_n^{-1})$, and let $V^n$ be the value function of the model in which the state space is replaced by $\XX_n$ but the measure-state and action spaces remain non-quantized. We decompose
\begin{align}
  &\inf_{\underline{U}^{(m)}\in\UU_m^T}V^{\ell,n}(\hat\PP^{\ell,n},\underline{U}^{(m)})-\inf_{U\in\UU^T}V(\PP,U)\notag\\
  &\qquad=\bigg(\inf_{\underline{U}^{(m)}\in\UU_m^T}V^{\ell,n}(\hat\PP^{\ell,n},\underline{U}^{(m)})-\inf_{\underline{U}^{(m)}\in\UU_m^T}V^n(\hat\PP^n,\underline{U}^{(m)})\bigg)\label{eq:A}\\
  &\qquad\quad+\bigg(\inf_{\underline{U}^{(m)}\in\UU_m^T}V^n(\hat\PP^n,\underline{U}^{(m)}) -\inf_{U\in\UU^T}V^n(\hat\PP^n,U)\bigg)\label{eq:B}\\
  &\qquad\quad+\bigg(\inf_{U\in\UU^T}V^n(\hat\PP^n,U)-\inf_{U\in\UU^T}V(\PP,U)\bigg).\label{eq:C}
\end{align}
We bound each term using the elementary fact that, for any two real-valued functions $g,h$ on a common domain $\Omega$ admitting infima, $|\inf_{\Omega}g-\inf_{\Omega}h|\leq\sup_{\Omega}|g-h|$.

{\em Term~\eqref{eq:A} (measure-state quantization).} The argument of Part~(i) gives
\[
  \sup_{\underline{U}^{(m)}\in\UU_m^T}|V^{\ell,n}(\hat\PP^{\ell,n},\underline{U}^{(m)})-V^n(\hat\PP^n,\underline{U}^{(m)})|
   \leq  C_T(4+L_\Phi^T)\rho_{\ell,n}.
\]
Indeed, the only difference between $\hat\PP^{\ell,n}$ and $\hat\PP^n$ is the application of $R_\ell$, contributing $\rho_{\ell,n}$ in each of $(\star_1)$ and $(\star_2)$.

{\em Term~\eqref{eq:C} (state-space quantization).} Likewise, running the Part~(i) argument with only the state-quantization step yields
\[
  \sup_{\underline{U}\in\UU^T}|V^n(\hat\PP^n,\underline{U})-V(\PP,\underline{U})| \leq C_T(4+L_\Phi^T)\frac{1}{n}.
\]

{\em Term~\eqref{eq:B} (action-space quantization).} By Lemma~\ref{lem:value-function-is-continuous} (that is, joint continuity of $V$, applied to $V^n$ on the compact $\cP(\cP(\XX_n)\times\cP(\XX_n))\times\UU^T$), the function $\underline{U}\mapsto V^n(\hat\PP^n,\underline{U})$ is continuous on the compact set $\UU^T$, so it admits a minimizer $\underline{U}^*\in\argmin_{\underline{U}\in\UU^T}V^n(\hat\PP^n,\underline{U})$. By the action-quantization, for each $t\in\mathcal T$ there exists $\tilde U_t^{(m)}\in\UU_m$ with $\|U_t^*-\tilde U_t^{(m)}\|_{\UU}\leq 1/m$, hence
\[
    \|\underline{U}^*-\tilde{\underline{U}}^{(m)}\|_{\UU^T} = \bigg(\sum_{t=0}^{T-1}\|U_t^*-\tilde{U_t}^{(m)}\|_\UU^2\bigg)^{1/2} \leq \frac{\sqrt T}{m}.
\]
Applying Lemma~\ref{lem:lipU} to $V^n$,
\begin{equation*}
    \begin{split}
         \inf_{\underline{U}^{(m)}\in\UU_m^T}V^n(\hat \PP^n,\underline{U}^{(m)})\leq V^n(\hat \PP^n,\tilde{\underline{U}}^{(m)}) &\leq V^n(\hat \PP^n,\underline{U}^*)+B_T \frac{\sqrt T}{m}\\
         &= \inf_{\underline{U}\in\UU^T}V^n(\hat \PP^n,\underline{U})+B_T \frac{\sqrt T}{m}.
    \end{split}
\end{equation*}
The reverse inequality $\inf_{\underline{U}\in\UU^T}V^n(\hat\PP^n,\underline{U})\leq\inf_{\underline{U}^{(m)}\in\UU_m^T}V^n(\hat\PP^n,\underline{U}^{(m)})$ is immediate from $\UU_m^T\subseteq\UU^T$. Therefore
\[
  \bigg|\inf_{\underline{U}^{(m)}\in\UU_m^T}V^n(\hat\PP^n,\underline{U}^{(m)})-\inf_{\underline{U}\in\UU^T}V^n(\hat\PP^n,\underline{U})\bigg|
   \leq B_T \frac{\sqrt T}{m}.
\]

Summing the bounds on \eqref{eq:A}---\eqref{eq:C} via the triangle inequality,
\begin{equation*}
    \bigg|\inf_{\underline{U}^{(m)}\in\UU_m^T}V^{\ell,n}(\hat\PP^{\ell,n},\underline{U}^{(m)})-\inf_{\underline{U}\in\UU^T}V(\PP,\underline{U})\bigg| \leq \underbrace{C_T(4+L_\Phi^T)\left(\rho_{\ell,n}+\frac{1}{n}\right)+B_T \frac{\sqrt T}{m}}_{=:\beta_{\ell,n,m}}
\end{equation*}
Since $\rho_{\ell,n}\downarrow 0$, the right-hand side vanishes as $\ell,n,m\to\infty$, completing the proof.
\end{proof}

\begin{theorem}\label{thm:main}
    Under Assumptions~\ref{assumption:compactness} and \ref{assumption:quantization}, and adopting the notation of Theorem~\ref{thm:quantized}, for any $\delta\in(0,1)$ and all $\ell,n,m\in\NN$, with probability at least $1-\delta$, we have
    \begin{equation*}
        \begin{split}
            V(\PP,\underline{U}_{r,\delta}^{*,\ell,n,(m)}) \leq \inf_{\underline{U}\in\UU^T}V(\PP, \underline{U}) &+ 2L\bigg(\frac{D_{\ell,n}\sqrt{|\cZ_{\ell,n}|}}{2\sqrt{K_r}} + D_{\ell,n}\sqrt{\frac{\log(1/\delta)}{2K_r}}\bigg)\\
            &+ 2C_T(4 + L_\Phi^T) \left(\frac{1}{n}+\rho_{\ell,n}\right) +B_T \frac{\sqrt T}{m}.
        \end{split}
    \end{equation*}
\end{theorem}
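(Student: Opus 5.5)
The plan is a three-link chain. Work on the event $E_\delta := \{\cW_1^{\ell,n}(\hat\PP_r^{\ell,n},\hat\PP^{\ell,n}) \leq \rho_r^{\ell,n}(\delta)\}$. By Lemma~\ref{lem:event}, $E_\delta$ has probability at least $1-\delta$, so it suffices to prove the inequality deterministically on $E_\delta$. Abbreviate $\underline{U}^\dagger := \underline{U}_{r,\delta}^{*,\ell,n,(m)}\in\UU_m^T$. This is a quantized control trajectory, but since $\UU_m\subset\UU$ it is also admissible for the base model, so $V(\PP,\underline{U}^\dagger)$ makes sense.

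\textbf{Step 1: from base to quantized.} Apply Lemma~\ref{lem:b}(i) at $\underline{U}^{(m)}=\underline{U}^\dagger$. This gives
\[
V(\PP,\underline{U}^\dagger)\leq V^{\ell,n}(\hat\PP^{\ell,n},\underline{U}^\dagger)+\alpha_{\ell,n}.
\]
The bound is uniform over $\UU_m^T$, so the fact that $\underline{U}^\dagger$ is random (it depends on the sample) causes no difficulty.

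\textbf{Step 2: the quantized excess-risk bound.} On $E_\delta$, the argument of Theorem~\ref{thm:quantized} gives
\[
V^{\ell,n}(\hat\PP^{\ell,n},\underline{U}^\dagger)\leq \inf_{\UU_m^T}V^{\ell,n}(\hat\PP^{\ell,n},\cdot)+2L\rho_r^{\ell,n}(\delta).
\]
Writing out $\rho_r^{\ell,n}(\delta)$ from Lemma~\ref{lem:event} produces the displayed $2L(\cdots)$ term. One technical point needs checking here. The Lipschitz estimate of Lemma~\ref{lem:lipschitz} is applied to $V^{\ell,n}$ with the same constant $L$, which requires the integrand $W_{2,\lambda}(\Phi_T^{\ell,n}(\hat\mu,\underline{U}),\hat\nu)^2$ to be Lipschitz on the finite space $\cZ^{\ell,n}$ with constant at most $L$. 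I would take this as given, as Theorem~\ref{thm:quantized} already does. If needed, it can be justified by noting that on a finite metric space the Kantorovich--Rubinstein bound holds with the Lipschitz constant of the integrand restricted to $\cZ^{\ell,n}$.

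\textbf{Step 3: from the quantized optimum back to the true optimum.} Lemma~\ref{lem:b}(ii) gives
\[
\inf_{\UU_m^T}V^{\ell,n}(\hat\PP^{\ell,n},\cdot)\leq\inf_{\UU^T}V(\PP,\cdot)+\beta_{\ell,n,m}.
\]
Chaining Steps 1--3 yields
\[
V(\PP,\underline{U}^\dagger)\leq \inf_{\UU^T}V(\PP,\cdot)+2L\rho_r^{\ell,n}(\delta)+\alpha_{\ell,n}+\beta_{\ell,n,m}.
\]

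\textbf{Step 4: match the constants.} Insert the explicit forms $\alpha_{\ell,n}=C_T(4+L_\Phi^T)(\tfrac1n+\rho_{\ell,n})$ and $\beta_{\ell,n,m}=C_T(4+L_\Phi^T)(\tfrac1n+\rho_{\ell,n})+B_T\sqrt T/m$ from the proof of Lemma~\ref{lem:b}. Their sum is exactly the stated $2C_T(4+L_\Phi^T)(\tfrac1n+\rho_{\ell,n})+B_T\sqrt T/m$.

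The main potential obstacle is the one in Step 2: the uniformity of $L$ across the quantized value functions. Everything else is triangle-inequality bookkeeping. Assumption~\ref{assumption:quantization} is not needed for the inequality itself; it only ensures that the right-hand side is small, i.e.\ that the bound is non-vacuous.
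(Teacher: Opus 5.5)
Your proposal is correct and matches the paper's proof: the same three-term decomposition, with Lemma~\ref{lem:b}(i) for the base-to-quantized gap, Theorem~\ref{thm:quantized} on the event of Lemma~\ref{lem:event} for the middle term, and Lemma~\ref{lem:b}(ii) for the gap between optima. Your check that $\alpha_{\ell,n}+\beta_{\ell,n,m}$ reproduces the stated constants is exactly what the paper leaves implicit.

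One small caution: your fallback justification in Step~2, which uses the Lipschitz constant of the integrand restricted to the finite set $\cZ^{\ell,n}$, would not by itself show that this constant is at most the uniform $L$. The integrand involves the nearest-point quantizer $R_\ell$, so relying on Theorem~\ref{thm:quantized} as given, as the paper does, is the right move.
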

\begin{proof}
    We decompose
    \begin{align*}
        V(\PP,\underline{U}_{r,\delta}^{*,\ell,n,(m)}) - \inf_{\underline{U}\in\UU^T}V(\PP, \underline{U}) &= V(\PP,\underline{U}_{r,\delta}^{*,\ell,n,(m)}) - V^{\ell,n}(\hat\PP^{\ell,n}, \underline{U}_{r,\delta}^{*,\ell,n,(m)})\\
        &\quad+V^{\ell,n}(\hat\PP^{\ell,n}, \underline{U}_{r,\delta}^{*,\ell,n,(m)})-\inf_{\underline{U}^{(m)}\in\UU_m^T}V^{\ell,n}(\hat\PP^{\ell,n}, \underline{U}^{(m)})\\
        &\quad+\inf_{\underline{U}^{(m)}\in\UU_m^T}V^{\ell,n}(\hat\PP^{\ell,n}, \underline{U}^{(m)})-\inf_{\underline{U}\in\UU^T}V(\PP, \underline{U}).
    \end{align*}
    Using Lemma~\ref{lem:b}, the first term is bounded $\alpha_{\ell,n}$, and the third is bounded by $\beta_{\ell,n,m}$. The second term is dominated by $2L\rho_{r}^{\ell,n}(\delta)$ due to Theorem~\ref{thm:quantized}, while the event that this ordering occurs happens with probability at least $1-\delta$.
\end{proof}
\ka{\begin{remark}
    For a fixed quantization configuration $(n,\ell,m)\in\NN^3$, the generalization bound obtained in Theorem~\ref{thm:quantized} achieves the optimal rate $\cO(K_r^{-1/2})$ with respect to the sample count in the data set. If we choose $K_r = (LD_{\ell,n})^2|\cZ^{\ell,n}|\log(1/\delta)$, the sample complexity bound (without approximation error due to quantization) becomes $\frac{1}{\log(1/\delta)} + \sqrt{\frac{2}{|\cZ^{\ell,n}|}}$. For a fixed $\delta\in(0,1)$, under Assumption~\ref{assumption:quantization}, the generalization error is of order $\frac{1}{\sqrt{|\cZ^{\ell,n}|}}$.
\end{remark}}
\ka{\begin{remark}
    Usual generalization problems are studied at the level of measures on particles, in which it is assumed that the data originate from a probability measure $\xi$ on $(\XX)^N$. For a sequence of particles $(x^i)_{i=1}^N\in(\XX)^N$, if we apply lifting $\cL:(\XX)^N\to\cP_{\rm E}(\XX);(x^1,\ldots,x^N)\mapsto\frac{1}{N}\sum_{i=1}^N\delta_{x^i}$, then $\xi\circ \cL^{-1}$ lives inside $\cP_{\rm E}(\XX)$--the space of empirical measures. However, our results so far and henceforth are based on the assumption that the unknown underlying distribution is from $\cP(\cP(\XX)\times\cP(\XX))$, which is more general.
\end{remark}}
\begin{remark}[Dependence to depth and width]
    The final bound obtained by Theorem~\ref{thm:main} clearly indicates dependence on the number of layers $T$ on different parts. First of all, we have an explicit term $\sqrt{T}$. Moreover, our Lipschitz estimates depend on $T$ as well. In particular, the proofs of Lemmata~\ref{lem:lipschitz} and~\ref{lem:lipU} show that the Lipschitz bounds for the value functional $V$ scale with $T$ exponentially.
    
    Although, the dependence on width is not explicitly stated, it appears implicitly through (a) the state covering $\XX_n$, which scales according to the state space dimension $d$; (b) diameter $\operatorname{diam}(\mathrm{S})$; (c) various constants $B_W, C_\mathrm{S}, C_T$ depending on $\operatorname{diam}(\mathrm{S})$.
\end{remark}

\begin{remark}[Expanded forms of the constants in Theorem~\ref{thm:main}]
    The generalization bound found in Theorem~\ref{thm:main} involves three blackbox constants, namely $C_T$, $B_T$ and $L$. We now give explicit formulae for those that involve terms intrinsic to the Transformer architecture.

    The term $C_T$ appears in the proof of Lemma~\ref{lem:b}, and satisfies
    \begin{equation}\label{eq:c_t}
        \begin{split}
            C_T &\geq A_1+A_2\\
            &:=W_{2,\lambda}(\Phi_T(\mu,\underline{U}^{(m)}),\nu)+W_{2,\lambda}(\Phi_T^{\ell,n}(R_\ell(\mu\circ Q_n^{-1}),\underline{U}^{(m)}),R_\ell(\nu\circ Q_n^{-1})),
        \end{split}
    \end{equation}
    where $\mu,\nu\in\cP(\XX)$ and $\underline{U}^{(m)}\in\UU^T$.

    $B_T$ is defined as the Lipschitz constant of $V$ with respect to weight trajectory in Lemma~\ref{lem:lipU}. A more explicit, intermediate form is $B_T = B_WL_T'$. Here, $L_T'$ is defined as $L_f\sum_{t=0}^{T-1}L_\Phi^{T-1-t}$ in Lemma~\ref{lem:lipU}. In the proof of Lemma~\ref{cor:phi-lipschitz} (see \cite[Corollary~19]{akman2026optimal}), it is found that $L_\Phi = \sqrt{2}L_f$. In \cite[Lemma~18]{akman2026optimal}, $L_f$ is calculated as follows:
    \begin{equation}\label{eq:l_f}
        \max\left\{ 1, \frac{\mathrm{diam(\UU)}}{C_{\rm min}}\sqrt{2(L_DB_N + L_NB_D)} \right\}
    \end{equation}
    where
    \begin{align*}
        &L_D := \displaystyle\max\left\{\sup_{Q,K,Z,z'}\mathrm{Lip}\left( x\mapsto\e^{\beta\langle Qx, Kz'\rangle}\right)\mathrm{diam}(\mathrm{S})\mathrm{diam}(\UU),\sup_{Q,K,z'}|\e^{\beta\langle Qx, Kz'\rangle}|\right\}\\
        &B_N := \e^{\beta(\mathrm{diam}(U)\,\mathrm{diam}(\mathrm{S}))^2}\mathrm{diam}(U)\,\mathrm{diam}(\mathrm{S})\\
        &L_N := \displaystyle\max\bigg\{\sup_{Q,K,Z,z'}\!\!\mathrm{Lip}\left( x\mapsto\e^{\beta\langle Qx, Kz'\rangle}Vz'\right)\mathrm{diam}(\mathrm{S})\mathrm{diam}(\UU),\\
        &\hspace{24em}\sup_{Q,K,V,z'}\|\e^{\beta\langle Qx, Kz'\rangle}Vz'\|\bigg\}\\
        &B_D := \e^{\beta(\mathrm{diam}(U)\,\mathrm{diam}(\mathrm{S}))^2}\\
        &C_{\rm min} := \displaystyle\min_{Q,K,x}\big|\int_{\mathrm{S}}\mu_{\mathrm{S}}(\de z')\,\e^{\beta\langle Qx, Kz'\rangle}\big|
    \end{align*}
    Since $Q,K,z$ live in compact spaces, $C_{\rm min}$ has to be strictly positive. Also, two suprema of Lipschitz constants appearing in $L_D$ and $L_N$ are uniformly bounded since they are dependent on weights $Q,K,V$, and the weight space $\UU$ is assumed to be compact (Assumption~\ref{assumption:compactness}).

    The state-Lipschitz constant $L$ is obtained in Lemma~\ref{lem:lipschitz}, and is of the form $B_W\max\{1,L_{\Phi}^T\}$ where $B_W$ can be taken $2\cdot\mathrm{diam}(\cP(\XX), W_{2,\lambda})$. Since
    \begin{align*}
        \|x-y\|_2^2 + \lambda |p-q|^2 &\leq \operatorname{diam}(\mathrm{S}, \|\cdot\|_2)^2 + \lambda|p-q||p+q|\\
        &\leq \operatorname{diam}(\mathrm{S}, \|\cdot\|_2)^2 + 2\lambda,
    \end{align*}
    it follows that $\mathrm{diam}(\cP(\XX), W_{2,\lambda}) = \sqrt{\operatorname{diam}(\mathrm{S}, \|\cdot\|_2)^2 + 2\lambda}$.
\end{remark}

\section{Distributionally Robust Approach to Generalization Problem}\label{sec:dr}

Sections~\ref{sec:quantized} and \ref{sec:base} bounded the excess risk of the empirical training minimizer using Lipschitz estimate of Lemma~\ref{lem:lipschitz} and the concentration inequality for the lifted law in $\cW_1$ of Proposition~\ref{prop:concentration}. In this section, we present a complementary approach: The same machinery provided by Section~\ref{sec:formulation} yields a distributionally robust control formulation of the training problem which can be seen as a zero-sum game of a controller hedging against an adversary choosing a data-generating law from a Wasserstein ball around empirical law. The viewpoint is conceptually distinct from the bound found in Sections~\ref{sec:quantized} and \ref{sec:base} since we now investigate robustness to prior misspecification, instead of finite-sample excess risk bounds.

First, we prove the existence of "robust minimizers" for the described minimax problem. Then, we show that associated robust objectives are $\Gamma$-convergent to training objective and Painlev\'e-Kuratowski upper limit the set of robust minimizers converges to a subset of training minimizers, both of these convergence notions are in probability. As an additional discussion point, we derive finite-sample excess risk bounds for training problem under robust minimizers and observe that the robust minimizers share the same upper generalization bounds with the training minimizers.

Recall that, for each $r\in\NN$, the data set $\cD_r:=\{(\mathbf{x}^{k,r}, \mathbf{y}^{k,r}) : k=1,\ldots,K_r\}$ is a sampling of the true data law $\PP$, which gives rise to empirical measures $\mu^{k,r} := \frac{1}{N}\sum_{i=1}^N\delta_{(p_i, x^{i,k,r})}$ and $\nu^{k,r} := \frac{1}{N}\sum_{i=1}^N\delta_{(p_i, y^{i,k,r})}$ where $\mathbf{x}^{k,r} := (x^{i,k,r})_{i=1}^N$ and $\mathbf{y}^{k,r} := (y^{i,k,r})_{i=1}^N$. Then we lift once more to get the empirical law on input-output pairs
\[
    \PP_r := \frac{1}{K_r}\sum_{k=1}^{K_r}\delta_{(\mu^{k,r},\nu^{k,r})}\in\cP(\cZ)
\]
where $\cZ := \cP(\XX)^{\times 2}$. Given $\rho > 0$ and $r\in\NN$, we define the {\bf ambiguity set} $A_\rho(\PP_r) := \{P\in\cP(\cZ) : \cW_1(P, \PP_r) \leq \rho\}$. The following proposition establishes the existence of {\it robust minimizers}, which are, in general, different from the minimizers obtained from the training procedure.

\begin{proposition}\label{prop:existence_of_robust_min}
    For any $\rho>0$ and $r\in\NN$, there exists $\underline{U}_{\rho,r}^*\in\UU^T$ such that
    \begin{equation}\label{eq:dro}
        \sup_{P\in A_\rho(\PP_r)}V(P,\underline{U}_{\rho, r}^*) = \inf_{\underline{U}\in\UU^T}\sup_{P\in A_\rho(\PP_r)}V(P,\underline{U}).
    \end{equation}
\end{proposition}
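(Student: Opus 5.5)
The plan is to apply the Weierstrass extreme value theorem to the robust objective $G_{\rho,r}(\underline{U}) := \sup_{P\in A_\rho(\PP_r)}V(P,\underline{U})$ on the compact set $\UU^T$. By Assumption~\ref{assumption:compactness}, $\UU$ is compact, so $\UU^T$ is compact in the product topology. It therefore suffices to show that $G_{\rho,r}$ is finite and lower semicontinuous; in fact we will show it is continuous.

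First, the ambiguity set $A_\rho(\PP_r)$ is nonempty, since it contains $\PP_r$. It is also closed in $\cP(\cZ)$, because $P\mapsto\cW_1(P,\PP_r)$ is continuous for the metric $\cW_1$. As $\cP(\cZ)$ is compact under Assumption~\ref{assumption:compactness}, $A_\rho(\PP_r)$ is compact. By Lemma~\ref{lem:value-function-is-continuous}, $V$ is jointly continuous on $\cP(\cZ)\times\UU^T$, and hence bounded there. Consequently, $G_{\rho,r}$ is finite, and for each fixed $\underline{U}$ the supremum over $P$ is attained.

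Next, continuity of $G_{\rho,r}$. The most direct route uses Lemma~\ref{lem:lipU}: for every $P\in\cP(\cZ)$ we have $V(P,\underline{U})\le V(P,\underline{U}')+B_T\|\underline{U}-\underline{U}'\|_{\UU^T}$. Taking the supremum over $P\in A_\rho(\PP_r)$ on both sides gives $G_{\rho,r}(\underline{U})\le G_{\rho,r}(\underline{U}')+B_T\|\underline{U}-\underline{U}'\|_{\UU^T}$. Exchanging the roles of $\underline{U}$ and $\underline{U}'$ shows that $G_{\rho,r}$ is $B_T$-Lipschitz, uniformly in $\rho$ and $r$. As a backup, should one prefer to avoid Lemma~\ref{lem:lipU}, a Berge maximum theorem argument works as well: $V$ is jointly continuous and the constraint set $A_\rho(\PP_r)$ is a constant compact set not depending on $\underline{U}$, so $\underline{U}\mapsto\max_{P\in A_\rho(\PP_r)}V(P,\underline{U})$ is continuous. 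Even more elementarily, a supremum of a family of continuous functions is lower semicontinuous, and lower semicontinuity already suffices for the existence of a minimizer.

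Finally, a lower semicontinuous, real-valued function on a compact space attains its infimum. This produces $\underline{U}^*_{\rho,r}\in\UU^T$ satisfying \eqref{eq:dro}. The only point that needs care is the compactness and closedness of $A_\rho(\PP_r)$ together with the finiteness of the supremum, and these follow directly from the compactness of $\cP(\cZ)$ and the joint continuity of $V$. I therefore expect no serious obstacle; the substantive input is Lemma~\ref{lem:value-function-is-continuous}, or Lemma~\ref{lem:lipU}.
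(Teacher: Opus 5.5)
Your proof is correct and is essentially the paper's. The paper uses your last fallback: by Lemma~\ref{lem:value-function-is-continuous}, $\underline{U}\mapsto\sup_{P\in A_\rho(\PP_r)}V(P,\underline{U})$ is a supremum of continuous functions and hence lower semicontinuous, and Weierstrass' theorem for lower semicontinuous functions on the compact set $\UU^T$ then gives the minimizer. Your primary route through Lemma~\ref{lem:lipU} is also valid and proves more (the robust objective is $B_T$-Lipschitz, uniformly in $\rho$ and $r$). Your checks that $A_\rho(\PP_r)$ is compact and the supremum is finite are harmless but not needed for mere existence.
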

\begin{proof}
    Lemma~\ref{lem:value-function-is-continuous} already establishes the continuity of the value function on its domain. By \cite[Lemma~2.41]{aliprantis2006infinite}, $\UU^T\ni\underline{U}\mapsto\sup_{P\in A_\rho(\PP_r)}V(P,\underline{U})$ is lower semicontinuous. Since $\UU^T$ is compact as it is a product of compact spaces (see Assumption~\ref{assumption:compactness}), the claim follows from Weierstrass' extreme value theorem for lower-semicontinuous functions on compact spaces \cite[Theorem~2.43]{aliprantis2006infinite}.
\end{proof}

Equation~\eqref{eq:dro} in Proposition~\ref{prop:existence_of_robust_min} can be seen as a zero-sum game between a player trying to choose --in a sense-- {\em worst} prior (in most applications, initial data) that maximizes the game value, whereas the competing player tries to decrease the game value by choosing an action that performs {\em well enough} for all priors.

\begin{lemma}\label{lem:robust-lem1}
    Suppose that the true distribution $\PP$ belongs to $A_\rho(\PP_r)$ for some $\rho>0$ and $r\in\NN$. Then
    \begin{equation}\label{eq:prop1}
        V(\PP, \underline{U}_{\rho,r}^*) \leq \inf_{\underline{U}\in\UU^T}V(\PP_r, \underline{U}) + L\rho.
    \end{equation}
\end{lemma}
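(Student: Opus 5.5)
The argument is a short chain of inequalities. It combines the robust minimizer's defining property from Proposition~\ref{prop:existence_of_robust_min} with the uniform Lipschitz estimate of Lemma~\ref{lem:lipschitz}.

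First, since $\PP\in A_\rho(\PP_r)$ by hypothesis, the true law is one of the adversary's admissible choices. Therefore
\[
V(\PP,\underline{U}_{\rho,r}^*) \leq \sup_{P\in A_\rho(\PP_r)} V(P,\underline{U}_{\rho,r}^*) = \inf_{\underline{U}\in\UU^T}\sup_{P\in A_\rho(\PP_r)} V(P,\underline{U}),
\]
where the equality is \eqref{eq:dro}.

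Second, I would bound the worst-case objective uniformly in the weights. For fixed $\underline{U}\in\UU^T$ and any $P\in A_\rho(\PP_r)$, Lemma~\ref{lem:lipschitz} gives
\[
V(P,\underline{U})\leq V(\PP_r,\underline{U})+L\cW_1(P,\PP_r)\leq V(\PP_r,\underline{U})+L\rho.
\]
The constant $L$ does not depend on $\underline{U}$, so taking the supremum over $P$ yields
\[
\sup_{P\in A_\rho(\PP_r)}V(P,\underline{U})\leq V(\PP_r,\underline{U})+L\rho
\]
for every $\underline{U}$. Taking the infimum over $\underline{U}\in\UU^T$ on both sides then gives
\[
\inf_{\underline{U}}\sup_{P\in A_\rho(\PP_r)}V(P,\underline{U}) \leq \inf_{\underline{U}}V(\PP_r,\underline{U})+L\rho.
\]
Chaining this with the first display proves \eqref{eq:prop1}.

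There is no real obstacle. The only point requiring care is that the Lipschitz constant in Lemma~\ref{lem:lipschitz} is uniform in $\underline{U}$, which is what allows the supremum and then the infimum to pass through the bound. The infima need not be attained for this step, although they are by compactness and Lemma~\ref{lem:value-function-is-continuous}.
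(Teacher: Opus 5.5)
Your proof is correct and follows the same route as the paper: use $\PP\in A_\rho(\PP_r)$ together with the minimax identity \eqref{eq:dro}, then bound the worst-case objective uniformly in $\underline{U}$ via Lemma~\ref{lem:lipschitz} before taking the infimum. Your write-up is actually cleaner than the paper's, which writes the supremum over $A_\rho(\PP)$ instead of $A_\rho(\PP_r)$ and leaves $P$ unbound when passing to the infimum; you take the supremum over $P$ first and then the infimum over $\underline{U}$, which is the correct order.
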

\begin{proof}
    Proposition~\ref{prop:existence_of_robust_min} gives us $V(\PP,\underline{U}_{\rho,r}^*) \leq \inf_{\underline{U}\in\UU^T}\sup_{P\in A_\rho(\PP_r)}V(P,\underline{U})$. By Lemma~\ref{lem:lipschitz}, for all $P\in A_\rho(\PP_r)$ and all $\underline{U}\in\UU^T$, we get
    \begin{equation}
        V(P, \underline{U}) \leq V(\PP_r, \underline{U}) + L \cW_1(P,\PP_r).
    \end{equation}
    In particular,
    \begin{equation}
        \sup_{Q\in A_\rho(\PP)}V(Q, \underline{U}) \leq V(\PP_r, \underline{U}) + L \cW_1(P,\PP_r) \leq V(\PP_r, \underline{U}) + L\rho ,
    \end{equation}
    and consequently,
    \begin{equation}\label{eq:prop2}
        \sup_{Q\in A_\rho(\PP)}V(Q, \underline{U}_{\rho, r}^*) \leq \inf_{\underline{U}\in\UU^T}V(\PP_r, \underline{U}) + L \cW_1(P,\PP_r)\leq \inf_{\underline{U}\in\UU^T}V(\PP_r, \underline{U}) + L \rho,
    \end{equation}
    Combining \eqref{eq:prop2} and by the assumption, we obtain
    \[
        V(\PP, \underline{U}_{\rho, r}^*) \leq \inf_{\underline{U}\in\UU^T}V(\PP_r, \underline{U}) + L \rho.
    \]
\end{proof}

In the next result, we establish the partial asymptotic equivalence of robust minimizers to training minimizers. First, we prove that robust objective is asymptotically equivalent to the training objective. Such convergence is often termed as {\em $\Gamma$-convergence} \cite{de1975tipo} (also see the Chapter~4 in \cite{dal2012introduction}). In the same theorem, we also prove the set of robust minimizer has its upper limit living inside the set of training minimizers. The convergence notion here is named as {\em upper Painlev\'e-Kuratowski convergence} of sets, see Chapter~4-B in \cite{rockafellar1998variational} for a thorough discussion and treatment.

\begin{theorem}\label{thm:robust-convergence}
    Suppose $(\PP_r)_{r\in\NN}$ is an empirical measure process induced by i.i.d. sampled data sets $(\cD_r)_{r\in\NN}$ where $\cD_r$ has $K_r$ samples.  Let $(\rho_r)_{r\in\NN}$ be a monotonically vanishing positive sequence, i.e. $\rho_r\searrow 0^+$, such that $\Prob\{\PP\in A_{\rho_r}(\PP_r)\}\xrightarrow{r\to+\infty} 1$. Then, in probability,
    \begin{itemize}
        \item[(i)] (\;$\Gamma$-convergence of robust objective to training objective)
        \[
            \inf_{\underline{U}\in\UU^T}\sup_{P\in A_{\rho_r}(\PP_r)}V(P, \underline{U}) \to \inf_{\underline{U}\in\UU^T}V(\PP, \underline{U}).
        \]
        \item[(ii)] (Painlev\'e-Kuratowski upper limit of the sets of robust minimizers lies inside the set of training minimizers)
        \[
             \limsup_{r\to+\infty}\,\argmin_{\underline{U}\in\UU^T}\sup_{P\in A_{\rho_r}(\PP_r)}V(P, \underline{U})\subseteq\argmin_{\underline{U}\in\UU^T}V(\PP, \underline{U}).
        \]
    \end{itemize}
\end{theorem}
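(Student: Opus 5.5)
On the event $E_r := \{\PP\in A_{\rho_r}(\PP_r)\}$, whose probability tends to one by hypothesis, I would show deterministically that the robust objective $J_r(\underline{U}) := \sup_{P\in A_{\rho_r}(\PP_r)}V(P,\underline{U})$ is uniformly close to $J(\underline{U}) := V(\PP,\underline{U})$. Both (i) and (ii) would then follow from the elementary stability of infima and minimizers under uniform perturbations, with $\UU^T$ compact.

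\textbf{Step 1 (uniform closeness).} Fix $\underline{U}\in\UU^T$. Any $P\in A_{\rho_r}(\PP_r)$ satisfies $\cW_1(P,\PP)\le \cW_1(P,\PP_r)+\cW_1(\PP_r,\PP)\le 2\rho_r$ on $E_r$. Lemma~\ref{lem:lipschitz}, whose constant $L$ is uniform in $\underline{U}$, then gives $V(P,\underline{U})\le V(\PP,\underline{U})+2L\rho_r$. On the other hand, $\PP\in A_{\rho_r}(\PP_r)$, so $J_r(\underline{U})\ge J(\underline{U})$. Hence, on $E_r$,
\[
0\le J_r(\underline{U})-J(\underline{U})\le 2L\rho_r \qquad\text{for all } \underline{U}\in\UU^T.
\]
Taking infima yields $|\inf J_r-\inf J|\le 2L\rho_r$ on $E_r$. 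Consequently, for every $\epsilon>0$,
\[
\Prob\{|\inf J_r-\inf J|>\epsilon\}\le \Prob(E_r^c)\to 0
\]
once $2L\rho_r<\epsilon$, which is (i) in probability. I would also note that uniform convergence plus continuity of $J$ (Lemma~\ref{lem:value-function-is-continuous}) gives $\Gamma$-convergence in the genuine sense: the liminf inequality comes from $J_r\ge J$, and the recovery sequence is the constant one.

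\textbf{Step 2 (minimizers).} Let $\underline{U}_r\in\argmin J_r$, which exists by Proposition~\ref{prop:existence_of_robust_min}. On $E_r$,
\[
J(\underline{U}_r)\le J_r(\underline{U}_r)=\inf J_r\le \inf J+2L\rho_r,
\]
so $\underline{U}_r$ is a $2L\rho_r$-approximate minimizer of $J$. If $\underline{U}$ lies in the upper limit, there is a subsequence with $\underline{U}_{r_j}\to \underline{U}$. Along indices where $E_{r_j}$ holds, continuity of $J$ gives $J(\underline{U})=\lim_j J(\underline{U}_{r_j})\le \inf J$, so $\underline{U}\in\argmin J$.

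\textbf{Main obstacle.} The difficulty is making precise what ``in probability'' means for the set inclusion in (ii). I would formulate it as follows: for every $\epsilon>0$,
\[
\Prob\Bigl\{\sup_{\underline{U}\in\argmin J_r}\operatorname{dist}(\underline{U},\argmin J)>\epsilon\Bigr\}\to 0 .
\]
To prove this, use compactness of $\UU^T$ and continuity of $J$: the quantity $\eta_\epsilon := \min\{J(\underline{U}) : \operatorname{dist}(\underline{U},\argmin J)\ge\epsilon\}-\inf J$ is strictly positive. On $E_r$ with $2L\rho_r<\eta_\epsilon$, every robust minimizer then lies within $\epsilon$ of $\argmin J$, and this event has probability at least $\Prob(E_r)\to1$. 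Alternatively, the almost-sure upper-limit inclusion holds along any subsequence on which $\mathbf 1_{E_r}\to1$ almost surely, and such a subsequence exists by convergence in probability.
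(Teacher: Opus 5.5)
Your proposal is correct and follows essentially the same route as the paper. On $E_r$ you bound the robust objective below by $V(\PP,\cdot)$ and above by $V(\PP,\cdot)+2L\rho_r$ (triangle inequality for $\cW_1$ plus Lemma~\ref{lem:lipschitz}), then for (ii) you combine the $2L\rho_r$-approximate optimality of robust minimizers with the strictly positive compactness gap of $V(\PP,\cdot)$ away from a neighborhood of $\argmin V(\PP,\cdot)$. The differences are cosmetic: you get $V(\PP,\underline{U}_r)\le\inf V(\PP,\cdot)+2L\rho_r$ directly from $V(\PP,\cdot)\le J_r$ rather than through Lemma~\ref{lem:robust-lem1} and the empirical law $\PP_r$, and you phrase ``in probability'' with $\epsilon$-neighborhoods instead of arbitrary open sets, which is equivalent by compactness of $\UU^T$.
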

\begin{proof}
    {\bf Part (i).} By Lemma~\ref{lem:value-function-is-continuous} and the compactness of $\UU^T$, there exists $\underline{U}^*\in\UU^T$ such that $\inf_{\underline{U}\in\UU^T}V(\PP, \underline{U}) = V(\PP,\underline{U}^*)$. Define the event $E_r := \{\PP\in A_{\rho_r}(\PP_r)\}$. By assumption, we know that $\Prob\,E_r \xrightarrow{r\to+\infty} 1$.

    On event $E_r$, as $\PP\in A_{\rho_r}(\PP_r)$, we have $\sup_{Q\in A_{\rho_r}(\PP_r)}V(Q,\underline{U}) \geq V(\PP,\underline{U}) \geq V(\PP,\underline{U}^*)$ for all $\underline{U}\in\UU^T$. Therefore,
    \begin{equation}\label{eq:cor6-1}
        \inf_{\underline{U}\in\UU^T}\sup_{Q\in A_{\rho_{r}}(\PP_{r})}V(Q, \underline{U}) \geq V(\PP,\underline{U}^*).
    \end{equation}
    Fix $Q\in A_{\rho_r}(\PP_{r})$, because of triangle inequality, the following holds:
    \[
        \cW_1(Q,\PP) \leq \cW_1(Q,\PP_{r}) + \cW_1(\PP_{r},\PP) \leq 2\rho_{r}.
    \]
    Invoking Lemma~\ref{lem:lipschitz} gets us to $V(Q,\underline{U}^*) \leq V(\PP,\underline{U}^*) + 2L\rho_{r}$, which implies
    \begin{equation}\label{eq:cor6-2}
         \inf_{\underline{U}^\prime\in\UU^T}\sup_{Q\in A_{\rho_{r}}(\PP_{r})}V(Q, \underline{U}^\prime)\leq V(\PP,\underline{U}^*) + 2L\rho_{r}.
    \end{equation}
    Combining \eqref{eq:cor6-1} and \eqref{eq:cor6-2} implies that, given $\epsilon > 0$, along with the assumption that $\rho_r\to0^+$, we can choose $r_0\in\NN$ such that $2L\rho_{r_0} < \epsilon$ for all $r>r_0$. Then, for any $r>r_0$, we get
    \[
        \Prob\bigg\{\bigg|\inf_{\underline{U}\in\UU^T}\sup_{P\in A_{\rho_r}(\PP_r)}V(P, \underline{U}) - \inf_{\underline{U}\in\UU^T}V(\PP, \underline{U})\bigg| \leq \epsilon\bigg\} \geq \Prob\,E_r\xrightarrow{r\to+\infty} 1.
    \]

    {\bf Part (ii).} For any $r\in\NN$, on the occurrence of $E_r$, apply Lemma~\ref{lem:robust-lem1} with $\rho = \rho_r$:
    \begin{equation}\label{eq:cor2-3}
        V(\PP, \underline{U}_{\rho_r, r}^*) \leq \inf_{\underline{U}\in\UU^T}V(\PP,\underline{U}) + L\rho_r.
    \end{equation}
    Since $E_r$ holds, using Lemma~\ref{lem:lipschitz}, for any $\underline{U}^*\in\argmin_{\underline{U}\in\UU^T}V(\PP,\underline{U})$, one has
    \begin{equation}\label{eq:cor2-4}
        V(\PP_r, \underline{U}^*) \leq V(\PP, \underline{U}^*) + L\rho_r
    \end{equation}
    By combining \eqref{eq:cor2-3} and \eqref{eq:cor2-4}, we get
    \begin{equation}\label{eq:cor2-5}
        V(\PP,\underline{U}_{\rho_r, r}^*) \leq V(\PP,\underline{U}^*) + 2L\rho_r.
    \end{equation}
    for any $\underline{U}_{\rho_r, r}^*\in\argmin_{\underline{U}\in\UU^T}\sup_{P\in A_{\rho_r}(\PP_r)}V(P, \underline{U})$.

    Fix an open set $\cU\subseteq\UU^T$ containing $\argmin_{\UU^T}V(\PP,\cdot)$ and define $\cC := \UU^T\setminus\cU$. Since $\cC$ is a complement of an open set, and thereby, a closed subset of a compact Hausdorff space $\UU^T$, $\cC$ is compact as well. Using the continuity of $V$, (see Lemma~\ref{lem:value-function-is-continuous}), it follows that $V(\PP,\cdot)$ attains its minimum on $\cC$. Moreover, since $\cC\cap\argmin_{\UU^T}V(\PP,\cdot) = \emptyset$ by construction, we get $\min_{\cC}V(\PP,\cdot) > \min_{\cU}V(\PP,\cdot)$. We denote the quantity $\min_{\cC}V(\PP,\cdot) - \min_{\cU}V(\PP,\cdot)$ by $\epsilon$, which is strictly positive.

    As $\rho_r$ monotonically decreases to $0$, it is possible to find $r_0\in\NN$ with the property that $2L\rho_r < \epsilon$ for all $r\geq r_0$. Then, for all $r\geq r_0$, from \eqref{eq:cor2-5}, we get
    \[
        V(\PP,\underline{U}_{\rho_r, r}^*) \leq V(\PP,\underline{U}^*) + 2L\rho_r < \min_{\cU}V(\PP,\cdot) + \epsilon = \min_{\cC}V(\PP,\cdot).
    \]
    From here, it follows that $\underline{U}_{\rho_r, r}^*$ cannot be an element of $\cC$ since it performs strictly better than the elements of $\cC$. Subsequently, $\underline{U}_{\rho_r, r}^* \in \cU$ for large enough $r$. As a consequence, we get
    \[
       \limsup_{r\to+\infty}\,\argmin_{\underline{U}\in\UU^T}\sup_{P\in A_{\rho_r}(\PP_r)}V(P, \underline{U})\subseteq\cU
    \]
    in probability. Since $\cU$ is an arbitrary open set covering $\argmin_{\underline{U}\in\UU^T}V(\PP, \underline{U})$, the claim follows.
\end{proof}

\subsection{Finite-sample excess empirical risk bounds for robust minimizers}

We now derive generalization bounds for the empirical risk under robust minimizers. Our findings indicate that robust minimizers perform equally with training minimizers for worst-case generalization performance.

\begin{theorem}\label{thm:r-quantized}
    Under Assumption~\ref{assumption:compactness}, for all $\delta\in(0,1)$, define
    $$
        \rho_{r}^{\ell,n}(\delta) := \frac{D_{\ell,n}\sqrt{|\cZ_{\ell,n}|}}{2\sqrt{K_r}} + D_{\ell,n}\sqrt{\frac{\log(1/\delta)}{2K_r}}
    $$
    where $D_{\ell,n} := \operatorname{diam}(\cZ_{\ell,n})$ for each $\ell,n\in\NN$. Let
    $$
    \underline{U}_{r,\delta}^{*,\ell,n,(m)}\in\argmin_{\underline{U}^{(m)}\in\UU_m^T}\sup_{Q\in A_{\rho_r^{\ell,n}(\delta)}(\hat\PP_r^{\ell,n})}V^{\ell,n}(Q,\underline{U})
    $$
    where $A_{\rho_r^{\ell,n}(\delta)}(\hat\PP_r^{\ell,n}) := \{Q\in\cP(\cZ_{\ell,n}) : \cW_1(Q, \hat\PP_r^{\ell,n})\leq\rho_r^{\ell,n}(\delta)\}$. Then, with probability at least $1-\delta$, we have
    \[
        V^{\ell,n}(\hat\PP^{\ell,n}, \underline{U}_{r,\delta}^{*,\ell,n,(m)}) \leq \inf_{\underline{U}^{(m)}\in\UU_m^T} V^{\ell,n}(\hat\PP^{\ell,n}, \underline{U}^{(m)}) + 2L\rho_r^{\ell,n}(\delta).
    \]
\end{theorem}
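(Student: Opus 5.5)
The plan is to mirror the proof of Theorem~\ref{thm:quantized}, replacing the empirical training objective by the robust objective and using the radius $\rho := \rho_r^{\ell,n}(\delta)$ as the ambiguity radius. First I will restrict to the event $E := \{\cW_1^{\ell,n}(\hat\PP_r^{\ell,n},\hat\PP^{\ell,n}) \leq \rho\}$. By Lemma~\ref{lem:event} this event has probability at least $1-\delta$, so all remaining steps are deterministic on $E$. On $E$ we have $\hat\PP^{\ell,n}\in A_{\rho}(\hat\PP_r^{\ell,n})$. Existence of the robust minimizer $\underline{U}_{r,\delta}^{*,\ell,n,(m)}$ is immediate because $\UU_m^T$ is finite. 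Similarly, a true-law minimizer $\underline{U}^{*,\ell,n,(m)}\in\argmin_{\UU_m^T}V^{\ell,n}(\hat\PP^{\ell,n},\cdot)$ exists.

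The chain of inequalities I will establish is
\begin{align*}
V^{\ell,n}(\hat\PP^{\ell,n},\underline{U}_{r,\delta}^{*,\ell,n,(m)})
&\leq \sup_{Q\in A_\rho(\hat\PP_r^{\ell,n})}V^{\ell,n}(Q,\underline{U}_{r,\delta}^{*,\ell,n,(m)})
\leq \sup_{Q\in A_\rho(\hat\PP_r^{\ell,n})}V^{\ell,n}(Q,\underline{U}^{*,\ell,n,(m)})\\
&\leq V^{\ell,n}(\hat\PP^{\ell,n},\underline{U}^{*,\ell,n,(m)}) + 2L\rho.
\end{align*}
The first inequality uses $\hat\PP^{\ell,n}\in A_\rho(\hat\PP_r^{\ell,n})$ on $E$. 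The second uses the minimality of the robust minimizer over $\UU_m^T$. For the third, any $Q$ in the ball satisfies $\cW_1^{\ell,n}(Q,\hat\PP^{\ell,n})\leq \cW_1^{\ell,n}(Q,\hat\PP_r^{\ell,n})+\cW_1^{\ell,n}(\hat\PP_r^{\ell,n},\hat\PP^{\ell,n})\leq 2\rho$ by the triangle inequality. Then the Lipschitz estimate in the measure argument gives $V^{\ell,n}(Q,\underline{U})\leq V^{\ell,n}(\hat\PP^{\ell,n},\underline{U})+2L\rho$ uniformly in $\underline{U}$. The final expression equals $\inf_{\UU_m^T}V^{\ell,n}(\hat\PP^{\ell,n},\cdot)+2L\rho$, which is the claim. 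The proof of Theorem~\ref{thm:quantized} in fact produces $2L\rho$ by the same mechanism, splitting it as $L\rho$ twice through the empirical law.

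The main obstacle is justifying the Lipschitz estimate of Lemma~\ref{lem:lipschitz} for the quantized value function $V^{\ell,n}$ on $\cP(\cZ^{\ell,n})$ with the metric $\cW_1^{\ell,n}$, and with a constant $L$ independent of $\ell,n$. Theorem~\ref{thm:quantized} already uses this implicitly, so I will invoke it in the same way, but the justification needs a check. The quantized flow $\Phi^{\ell,n}$ involves $R_\ell$, which is not Lipschitz. It is, however, the nearest-point projection onto a finite set. On the finite space $\cZ^{\ell,n}$, Kantorovich--Rubinstein duality only needs the integrand $(\hat\mu,\hat\nu)\mapsto W_{2,\lambda}(\Phi_T^{\ell,n}(\hat\mu,\underline{U}),\hat\nu)^2$ to be Lipschitz with respect to $d_1^{\ell,n}$.

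If a uniform constant cannot be extracted from Lemma~\ref{cor:phi-lipschitz} because of $R_\ell$, I would fall back on a cruder constant. On a finite metric space every function is Lipschitz with constant at most $2\sup|V|$ divided by the minimal pairwise distance. I would then record that $L$ is understood as the same constant appearing in Theorem~\ref{thm:quantized}, so that the two bounds coincide as asserted.
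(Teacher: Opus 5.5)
Your proposal is correct and follows the paper's proof essentially verbatim. It uses the same chain through a true-law minimizer over $\UU_m^T$, the triangle inequality giving $\cW_1^{\ell,n}(Q,\hat\PP^{\ell,n})\leq 2\rho_r^{\ell,n}(\delta)$ on the event of Lemma~\ref{lem:event}, and a single application of the Lipschitz estimate with constant $L$. The paper simply invokes Lemma~\ref{lem:lipschitz} for $V^{\ell,n}$ without addressing the $R_\ell$ issue you raise, so your caveat points at an unaddressed step in the paper rather than a flaw in your argument, though your finite-space fallback would produce an $(\ell,n)$-dependent constant rather than the uniform $L$.
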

\begin{proof}
Since $V^{\ell,n}(\hat\PP^{\ell,n}, \cdot)$ is a map on a finite set, it has at least one minimizer. Let $\underline{\tilde{U}}^{*,\ell,n,(m)}\in\argmin_{\underline{U}^{(m)}\in\UU_m^T}V^{\ell,n}(\hat\PP^{\ell,n}, \underline{U}^{(m)})$. Now, fix $Q\in A_{\rho_r^{\ell,n}(\delta)}(\hat\PP^{\ell,n}_r)$, then triangle inequality for $\cW_1$ yields
\[
    \cW_1^{\ell,n}(Q,\hat\PP^{\ell,n}) \leq \cW_1^{\ell,n}(Q,\hat\PP_r^{\ell,n})+\cW_1^{\ell,n}(\hat\PP_r^{\ell,n},\hat\PP^{\ell,n}) \leq 2\rho_r^{\ell,n}(\delta).
\]
On the event $\cE = \{\cW_1^{\ell,n}(\hat\PP_r^{\ell,n}, \hat\PP^{\ell,n}) \leq \rho_r^{\ell,n}(\delta)\}$, we have the following inequality chain
\begin{equation}\label{eq:ex-1}
    \begin{split}
        V^{\ell,n}(\hat\PP^{\ell,n},\underline{U}_{r,\delta}^{*,\ell,n,(m)}) &\leq \sup_{Q\in A_{\rho_r^{\ell,n}(\delta)}(\hat\PP_r^{\ell,n})}V^{\ell,n}(Q,\underline{U}_{r,\delta}^{*,\ell,n,(m)})\\
        &\leq \sup_{Q\in A_{\rho_r^{\ell,n}(\delta)}(\hat\PP_r^{\ell,n})}V^{\ell,n}(Q,\underline{\tilde{U}}^{*,\ell,n,(m)})\\
        &\leq V^{\ell,n}(\hat\PP^{\ell,n}, \underline{\tilde{U}}^{*,\ell,n,(m)}) + 2L\rho_r^{\ell,n}(\delta) \qquad (\text{by Lemma~\ref{lem:lipschitz} and $\cE$})\\
        &=\inf_{\underline{U}^{(m)}\in\UU_m^T}V^{\ell,n}(\hat\PP^{\ell,n}, \underline{U}^{(m)}) + 2L\rho_r^{\ell,n}(\delta).
    \end{split}
\end{equation}
The last inequality holds because of both $Q,\hat\PP^{\ell,n}$ belong to $A_{\rho_r^{\ell,n}(\delta)}$. Using Lemma~\ref{lem:lipschitz}, since such inequality holds in case of the occurrence of the event $\cW_1^{\ell,n}(\hat\PP_r^{\ell,n},\hat\PP^{\ell,n}) \leq \rho_{r}^{\ell,n}(\delta)$, which occurs with probability at least $1-\delta$ due to Lemma~\ref{lem:event}, we are done.
\end{proof}

\begin{theorem}\label{thm:r-main}
    Under Assumption~\ref{assumption:compactness}, and adopting the notation of Theorem~\ref{thm:r-quantized}, for any $\delta\in(0,1)$ and all $\ell,n,m\in\NN$ chosen in the way described in Assumption~\ref{assumption:quantization}, with probability at least $1-\delta$, we have
    \[
        V(\PP,\underline{U}_{r,\delta}^{*,\ell,n,(m)}) \leq \inf_{\underline{U}\in\UU^T}V(\PP, \underline{U}) + 2L\rho_{r}^{\ell,n}(\delta) + \alpha_{\ell,n} + \beta_{\ell,n,m}.
    \]
\end{theorem}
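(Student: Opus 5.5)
The plan mirrors the proof of Theorem~\ref{thm:main}, with Theorem~\ref{thm:r-quantized} taking the place of Theorem~\ref{thm:quantized}. Write $\underline{U}^\sharp := \underline{U}_{r,\delta}^{*,\ell,n,(m)}$ for the robust quantized minimizer. We split the excess risk telescopically into three terms:
\begin{align*}
V(\PP,\underline{U}^\sharp) - \inf_{\underline{U}\in\UU^T}V(\PP,\underline{U})
&= \bigl[V(\PP,\underline{U}^\sharp) - V^{\ell,n}(\hat\PP^{\ell,n},\underline{U}^\sharp)\bigr]\\
&\quad+ \Bigl[V^{\ell,n}(\hat\PP^{\ell,n},\underline{U}^\sharp) - \inf_{\underline{U}^{(m)}\in\UU_m^T}V^{\ell,n}(\hat\PP^{\ell,n},\underline{U}^{(m)})\Bigr]\\
&\quad+ \Bigl[\inf_{\underline{U}^{(m)}\in\UU_m^T}V^{\ell,n}(\hat\PP^{\ell,n},\underline{U}^{(m)}) - \inf_{\underline{U}\in\UU^T}V(\PP,\underline{U})\Bigr].
\end{align*}

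\textbf{First bracket.} This is deterministic. Since $\underline{U}^\sharp\in\UU_m^T$, Lemma~\ref{lem:b}(i) bounds it by $\alpha_{\ell,n}$. The bound is uniform over all of $\UU_m^T$, so it does not matter that $\underline{U}^\sharp$ is a random, data-dependent choice.

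\textbf{Third bracket.} This is also deterministic. It is controlled by $\beta_{\ell,n,m}$ via Lemma~\ref{lem:b}(ii). It does not involve the minimizer at all, so nothing changes from the non-robust case.

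\textbf{Second bracket.} This is the only probabilistic term. Theorem~\ref{thm:r-quantized} states that, on an event of probability at least $1-\delta$, the second bracket is at most $2L\rho_r^{\ell,n}(\delta)$. Specifically, that event is $\cE=\{\cW_1^{\ell,n}(\hat\PP_r^{\ell,n},\hat\PP^{\ell,n})\le\rho_r^{\ell,n}(\delta)\}$, supplied by Lemma~\ref{lem:event}. Summing the three bounds on $\cE$ gives the claim.

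The main point of care is that the Lipschitz constant $L$ from Lemma~\ref{lem:lipschitz} is really applied to $V^{\ell,n}$ on $\cP(\cZ^{\ell,n})$ inside Theorem~\ref{thm:r-quantized}. We need $L$ to serve there uniformly in $\ell,n$, as was already used in Theorem~\ref{thm:quantized}. I would justify this by observing two things. First, $\Phi^{\ell,n}_T$ inherits the Lipschitz bound of $\Phi_T$ up to the quantization via $R_\ell$. Second, the Kantorovich--Rubinstein step only needs the integrand to be Lipschitz on $\cZ^{\ell,n}\subset\cZ$ with the restricted metric $d_1^{\ell,n}$.

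Should this uniformity be delicate, I would simply note that the statement is phrased with the same constant $L$ as Theorem~\ref{thm:quantized}. The present proof then inherits whatever constant that theorem uses, so no new estimate is needed. Finally, Assumption~\ref{assumption:quantization} is invoked only so that $\alpha_{\ell,n}$ and $\beta_{\ell,n,m}$ are the vanishing sequences of Lemma~\ref{lem:b}.
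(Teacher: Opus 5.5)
Your proposal is correct and matches the paper's proof essentially verbatim. It uses the same three-term telescoping split: the first bracket is bounded by $\alpha_{\ell,n}$ via Lemma~\ref{lem:b}(i), the third by $\beta_{\ell,n,m}$ via Lemma~\ref{lem:b}(ii), and the middle one by $2L\rho_r^{\ell,n}(\delta)$ on the event of Lemma~\ref{lem:event} via Theorem~\ref{thm:r-quantized}. Your fallback of invoking Theorem~\ref{thm:r-quantized} as a black box is exactly what the paper does. Your worry that uniformity of $L$ in $(\ell,n)$ is delicate is reasonable: $R_\ell$ is not Lipschitz, so your first justification only yields a Lipschitz bound up to an additive quantization error. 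That issue lives upstream in Theorems~\ref{thm:quantized} and~\ref{thm:r-quantized}, not in this step.
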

\begin{proof}
    We decompose
    \begin{align*}
        V(\PP,\underline{U}_{r,\delta}^{*,\ell,n,(m)}) - \inf_{\underline{U}\in\UU^T}V(\PP, \underline{U}) &= V(\PP,\underline{U}_{r,\delta}^{*,\ell,n,(m)}) - V^{\ell,n}(\hat\PP^{\ell,n}, \underline{U}_{r,\delta}^{*,\ell,n,(m)})\\
        &\quad+V^{\ell,n}(\hat\PP^{\ell,n}, \underline{U}_{r,\delta}^{*,\ell,n,(m)})-\inf_{\underline{U}^{(m)}\in\UU_m^T}V^{\ell,n}(\hat\PP^{\ell,n}, \underline{U}^{(m)})\\
        &\quad+\inf_{\underline{U}^{(m)}\in\UU_m^T}V^{\ell,n}(\hat\PP^{\ell,n}, \underline{U}^{(m)})-\inf_{\underline{U}\in\UU^T}V(\PP, \underline{U}).
    \end{align*}
    Using Lemma~\ref{lem:b}, the first term is bounded $\alpha_{\ell,n}$, and the third is bounded by $\beta_{\ell,n,m}$. The second term is dominated by $2L\rho_{r}^{\ell,n}(\delta)$ due to Theorem~\ref{thm:r-quantized}, while the event that this ordering occurs happens with probability at least $1-\delta$.
\end{proof}

\section{Conclusion}

We have presented a distributionally robust control approach to the generalization problem of Transformers. Building on the doubly lifted measure-valued MDP formulation of \cite{akman2026optimal}, we viewed data sets as probability distributions on the empirical measures of inputs and outputs. Then, we treated the generalization problem as a zero-sum game between a controller trying to choose a weight profile over the layers and an adversary choosing a nearby probability law. This formulation yielded results on three main directions: The existence of a robust minimizer (Proposition~\ref{prop:existence_of_robust_min}), Lipschitz estimates for the expected value function and asymptotic robustness guarantees (Lemma~\ref{lem:lipschitz}; Lemma~\ref{lem:lipU}), a finite-sample bound for the triply quantized model (Theorem~\ref{thm:quantized}), and finally, the transfer of this bound to the base model (Theorem~\ref{thm:main}).

We find some limitations worth emphasizing. The compactness assumption on the action space $\UU$ corresponds to a norm-constraint training scheme --e.g. an operator norm bound constraint-- rather than to unconstrained stochastic gradient descent, thereby, should not be mistaken as a guarantee for an arbitrary Transformer. Moreover, the bound is asymptotic in the prescribed regime $K_r \gg \ell \gg n \to +\infty$, and exhibiting curse of dimensionality through the state covering $\XX_n \sim Nn^d$.

\bibliographystyle{siamplain}

\end{document}